\def\arxivversion{1}
\documentclass{article}

\usepackage{iclr2027_conference,times}
\usepackage{amssymb,amsthm,mathtools}
\usepackage{graphicx}
\usepackage{tikz}
\usepackage{float}
\usepackage{enumitem}
\usepackage{microtype}
\usepackage{xcolor}
\usepackage{hyperref}
\usepackage{url}
\usepackage[nameinlink,capitalise,noabbrev]{cleveref}

\hypersetup{
  colorlinks=true,
  citecolor=blue!55!black,
  linkcolor=blue!55!black,
  urlcolor=blue!55!black
}
\newtheorem{theorem}{Theorem}
\newtheorem{lemma}{Lemma}
\newtheorem{proposition}{Proposition}

\newcommand{\R}{\mathbb{R}}
\newcommand{\E}{\mathbb{E}}
\newcommand{\cA}{\mathcal{A}}
\newcommand{\cD}{\mathcal{D}}
\newcommand{\cG}{\mathcal{G}}
\newcommand{\cK}{\mathcal{K}}
\newcommand{\cL}{\mathcal{L}}
\newcommand{\cM}{\mathcal{M}}
\newcommand{\cP}{\mathcal{P}}
\newcommand{\cT}{\mathcal{T}}
\newcommand{\cW}{\mathcal{W}}
\newcommand{\Proj}{\operatorname{Proj}}
\newcommand{\dist}{\operatorname{dist}}
\newcommand{\argmaxop}{\operatorname*{arg\,max}}

\ifdefined\arxivversion
  \newcommand{\mechanismfigscale}{1}
  \newcommand{\scalingfigurewidth}{\linewidth}
  \newcommand{\empiricalfigurewidth}{0.94\linewidth}
\else
  \newcommand{\mechanismfigscale}{0.94}
  \newcommand{\scalingfigurewidth}{0.86\linewidth}
  \newcommand{\empiricalfigurewidth}{0.80\linewidth}
\fi

\title{Optimal Training-Time Scaling\\in Gradual Adaptation}
\ifdefined\arxivversion
\author{
Zonghuan Xu\thanks{Corresponding author.}\\
Fudan University\\
Shanghai, China\\
\texttt{2430XH10002@m.fudan.edu.cn}
\And
Krishna Harish\\
Lawrence E. Elkins High School\\
Missouri City, Texas, USA\\
\texttt{krishnaharish2009@gmail.com}
}
\iclrfinalcopy
\else
\author{Anonymous authors}
\fi

\begin{document}
\maketitle
\ifdefined\arxivversion
\lhead{Preprint}
\fi

\begin{abstract}
In gradual adaptation, how should the training time on each task change as the number of intermediate
tasks increases?  We study this question for overparameterized linear regression tasks that change
smoothly and share a zero-loss solution.  With $N$ tasks and training time $s_N$ on each, the final
learning progress converges to a continuum curve when $Ns_N\to\tau$.  The limiting progress is
$\Theta(\tau)$ for small $\tau$ and $\Theta(\tau^{-1})$ for large $\tau$, so both very short and very
long training produce little progress.  It follows that optimal per-task training times scale as
$s_N^\star=\Theta(N^{-1})$, equivalently $Ns_N^\star=\Theta(1)$.  Experiments on gradually rotated
MNIST and a natural Yearbook time shift are consistent with less per-task training as the path is
divided more finely.
\end{abstract}

\section{Introduction}

Many learning systems must process sequences of slowly changing training tasks.  In gradual domain
adaptation, intermediate distributions guide a learner from a source domain to a target domain
\citep{kumar2020gradual,chen2021indexed,wang2022gradual}.  In curriculum learning, related tasks are
ordered by difficulty or structure
\citep{bengio2009curriculum,kumar2010selfpaced,graves2017curriculum,weinshall2018curriculum,
hacohen2019power,klink2022transport,huang2022gradient}.  In continual test-time adaptation, a model
is repeatedly updated as its deployment data evolve
\citep{sun2020testtime,wang2021tent,wang2022cotta,niu2022eata,zhang2022memo,
niu2023sar,dobler2023rmt,yuan2023dynamic}.  We use \emph{gradual adaptation} for this setting of
sequential learning over nearby tasks, which also falls within the broader scope of continual
learning \citep{parisi2019continual,delange2021continual,vandeven2022three}.

The same gradual adaptation process can be divided into different numbers of tasks.  More tasks make
adjacent problems closer, but also require the learner to update more frequently.  Suppose the
sequence contains $N$ tasks and the learner trains for time $s_N$ on each task.  The task count $N$
controls the spacing between adjacent tasks, while $s_N$ controls the amount of optimization at each
task.  We ask: \textbf{As the number of tasks $N$ increases, how long should each task be trained to
maximize the learning progress after the entire sequence?}

Existing theory studies training time and interference across sequential tasks from different
angles.  Early-stopping theory treats training time as a statistical regularizer
\citep{yao2007early,raskutti2014early,ali2019early}, and stability analyses relate training duration
to generalization \citep{hardt2016faster}.  These results explain why stopping before convergence
can improve prediction from a finite sample.  Our effect is instead present for noiseless population
losses and concerns the state passed from one task to the next.

Domain-adaptation theory studies when transfer across distributions is possible, while practical
methods align features or predictions between source and target domains
\citep{bendavid2010theory,ganin2016dann,hoffman2018cycada,saito2018mcd}.  Gradual self-training
inserts intermediate distributions and studies how their number and placement affect target error
\citep{kumar2020gradual,chen2021indexed,wang2022gradual}.  These analyses represent each local
adaptation by an empirical risk minimization step.  They do not treat the amount of optimization at
each intermediate task as a variable that changes with the resolution of the path.

Curriculum learning studies the order and pace at which examples or tasks are presented
\citep{bengio2009curriculum,kumar2010selfpaced,graves2017curriculum,weinshall2018curriculum,
hacohen2019power}.  Continual-learning methods control interference through parameter
regularization, stored examples, and gradient projections
\citep{kirkpatrick2017ewc,zenke2017si,ritter2018laplace,benzing2022unifying,
lopezpaz2017gem,chaudhry2019agem,farajtabar2020ogd}.  Recent theory studies how training regimes and
regularized updates affect forgetting
\citep{mirzadeh2020training,zhao2024statistical,levinstein2025regularization}, including intermediate
training regimes \citep{lesort2023challenging,graldi2025lazy,mori2025protocols}.  Exact analyses of
linear regression characterize forgetting under fixed and cyclic task sequences
\citep{evron2022catastrophic,swartworth2023cyclic}, while recent work studies continual models under
SGD and task distributions \citep{evron2026sgd,xu2026distribution}.  These works establish that task
order and training regime matter.  We isolate a different question by holding the underlying task
path fixed and coupling its resolution $N$ to the local training time $s_N$.

We study this problem in overparameterized linear regression.  Task $t$ uses the population squared
loss
\[
  \cL_t(w)=\frac12\E_t\!\left[(x^\top w-y)^2\right],
\]
and all tasks share a zero-loss solution $w^\star$, so that $y=x^\top w^\star$ almost surely on every
task.

To compare different task counts, we index the complete gradual adaptation process by $t\in[0,1]$
and select $N$ tasks at $t_j=j/N$, $j=1,\ldots,N$.  Let
$H(t)=\E_t[xx^\top]$ be the Hessian of task $t$.  Gradient flow has an exact exponential solution:
training task $j$ for time $s_N$ updates the parameter error as
\[
  v_j=e^{-s_NH(t_j)}v_{j-1},
  \qquad
  v_N=e^{-s_NH(t_N)}\cdots e^{-s_NH(t_1)}v_0.
\]
Let
\[
  \cM=\bigcap_{t\in[0,1]}\arg\min_w\cL_t(w)
\]
be the common solution set, and suppose that the initial model $w_0$ already fits the first task.
With $\bar w=\Proj_{\cM}(w_0)$ and $v_j=w_j-\bar w$, the affine structure gives
$\dist(w_j,\cM)=\|v_j\|$.  We therefore measure the learning progress over the complete sequence by
\[
  \cP_{N,s_N}(v_0)
  =\dist(w_0,\cM)^2-\dist(w_N,\cM)^2
  =\|v_0\|^2-\|v_N\|^2.
\]
The optimal per-task training time $s_N^\star(v_0)$ maximizes this quantity.

Our first theorem gives the continuous limit as the task count grows.  If
$Ns_N\to\tau<\infty$, the discrete updates converge to
\[
  \dot x_\tau(t)=-\tau H(t)x_\tau(t),
  \qquad x_\tau(0)=v_0.
\]
Here $\tau$ is the effective training time spent along the complete task path.  The corresponding
progress converges to
\[
  \cP_\tau(v_0)=\|v_0\|^2-\|x_\tau(1)\|^2,
\]
which satisfies the exact identity
\[
  \cP_\tau(v_0)
  =2\tau\int_0^1x_\tau(t)^\top H(t)x_\tau(t)\,dt.
\]
This gives a common limiting curve $\tau\mapsto\cP_\tau(v_0)$ for comparing different choices of
$N$ and $s_N$.

Our second theorem characterizes both ends of this limiting curve.  Whenever the initial model is not
already in the common solution set,
\[
  \cP_\tau(v_0)=\Theta(\tau)\quad(\tau\downarrow0),
  \qquad
  \cP_\tau(v_0)=\Theta(\tau^{-1})\quad(\tau\to\infty).
\]
The relevant resource is the total effective training time $Ns_N$, whose finite limit is $\tau$.
When $\tau$ is small, the model barely moves.  When $\tau$ is large, it remains close to the current
task's zero-loss subspace at every point along the path.  As these subspaces rotate gradually, the
dominant effect is to change the direction of the remaining error rather than its norm.  This
explains the $\Theta(\tau^{-1})$ decay of the progress produced by overtraining.

Thus, both $\tau\downarrow0$ and $\tau\to\infty$ yield vanishing progress.  Only
\[
  Ns_N=\Theta(1)
\]
can retain a positive limiting amount of progress as the task count grows.  It follows that
\[
  s_N^\star(v_0)=\Theta(N^{-1}).
\]
Dividing the same task path more finely therefore requires reducing the training time on each task
by the same proportion.

Keeping $s_N$ fixed while increasing $N$ therefore does more than refine the discretization: it also
sends the total effective training time $Ns_N$ to infinity, where the progress vanishes.  Thus, in
the present setting, task resolution and per-task optimization time cannot be treated as independent
design choices.

We also characterize the first-order behavior in the regime $Ns_N\to\infty$.  For every fixed finite
per-task training time, the first-order coefficient is strictly larger than the coefficient obtained
by training each task to convergence.  If progress is instead measured through any strictly
increasing function of squared distance, the ordering of training times and the optimal training
time remain unchanged.

\paragraph{Contributions.}
The paper makes four main contributions.
\begin{itemize}
  \item We formulate per-task training time as an optimization problem over the task count $N$ and
  training time $s_N$, using the reduction in distance to the common solution set after all tasks as
  the learning objective.
  \item We derive the limiting learning curve controlled by the effective training time $\tau$ and
  prove its $\Theta(\tau)$ growth for small $\tau$ and $\Theta(\tau^{-1})$ decay for large $\tau$.
  \item We prove that only $Ns_N=\Theta(1)$ can retain positive limiting progress as the task count grows,
  yielding the optimal scaling $s_N^\star(v_0)=\Theta(N^{-1})$.
  \item We test the prediction on rotated MNIST and temporal Yearbook data.  Across both settings,
  finer task divisions favor less training per task overall, including under natural temporal shift.
\end{itemize}

\section{Problem Setup}
\label{sec:setup}

We first specify the overparameterized regression tasks and their smoothly varying zero-loss spaces.
We then define the sequential gradient-flow updates, learning progress relative to the common
solution set, and the optimal per-task training time.

\subsection{A smooth family of regression tasks}

Let $t\in[0,1]$ index a gradually changing task.  On task $t$, a feature--label pair
$(x,y)\in\R^d\times\R$ follows a distribution $\cD_t$.  We assume that there is a parameter
$w^\star\in\R^d$ such that $y=x^\top w^\star$ almost surely under every $\cD_t$.  The population
squared loss is therefore
\begin{equation}
  \begin{aligned}
    \cL_t(w)
    &:=\frac12\E_{\cD_t}\!\left[(x^\top w-y)^2\right] \\
    &=\frac12(w-w^\star)^\top H(t)(w-w^\star),
    \qquad H(t):=\E_{\cD_t}[xx^\top].
  \end{aligned}
  \label{eq:quadratic-loss}
\end{equation}
Here $H(t)$ is both the population Hessian and the feature second moment.  For every $z\in\R^d$,
\begin{equation}
  z^\top H(t)z
  =\E_{\cD_t}\!\left[(x^\top z)^2\right]\ge0,
  \label{eq:psd-explanation}
\end{equation}
so $H(t)$ is positive semidefinite.  Its kernel consists of parameter perturbations whose linear
predictions remain unchanged on task $t$:
\[
  V(t):=\ker H(t)
  =\{z:x^\top z=0\ \text{almost surely under }\cD_t\}.
\]
A parameter change along $z\in V(t)$ leaves the predictions on task $t$ unchanged.  For a finite
regression dataset with design matrix $X(t)\in\R^{n_t\times d}$ and labels
$y(t)=X(t)w^\star$, the same representation holds with
$H(t)=X(t)^\top X(t)/n_t$.  The overparameterized regime corresponds to
$\operatorname{rank}H(t)<d$, so each task leaves some parameter directions unconstrained.

We assume that $H:[0,1]\to\R^{d\times d}$ is a $C^3$ path of symmetric positive-semidefinite
matrices with constant rank.  Its positive eigenvalues satisfy the uniform bounds
\begin{equation}
  0<\mu\le\lambda\le L<\infty.
  \label{eq:spectral-gap}
\end{equation}
Constant rank ensures that $V(t)$ varies smoothly with $t$, while \cref{eq:spectral-gap} keeps every
nonzero curvature between $\mu$ and $L$.  Let
\[
  P(t)=\Proj_{V(t)},
  \qquad Q(t)=I-P(t),
  \qquad
  \cK=\bigcap_{t\in[0,1]}V(t).
\]
Thus $P(t)$ projects onto the parameter directions unconstrained by task $t$, while $Q(t)$ projects
onto their orthogonal complement.  The minimizer set of task $t$ is
$\cM(t)=w^\star+V(t)$, and the solution set shared by the full path is $w^\star+\cK$.

\subsection{Sequential training and learning progress}

Select tasks on the uniform grid $t_j=j/N$, $j=0,\ldots,N$, and write $v=w-w^\star$.  On task
$t_j$, gradient flow for \cref{eq:quadratic-loss} satisfies
\[
  \frac{d}{du}v(u)=-H(t_j)v(u).
\]
Training for time $s\ge0$ therefore applies the exact update
\begin{equation}
  v_j=e^{-sH(t_j)}v_{j-1}.
  \label{eq:task-update}
\end{equation}
Throughout, $s$ denotes continuous gradient-flow time.  A run of $k$ gradient-descent steps with
learning rate $\eta$ approximates \cref{eq:task-update} with $s\approx\eta k$ in the small-step
regime.  For task-wise fitting to convergence, we define
$e^{-\infty H(t)}:=P(t)$.

On the grid above, suppose the learner processes $t_1,\ldots,t_N$ in order and runs gradient flow for
the same time $s$ on every task.  Repeated application of \cref{eq:task-update} gives
\begin{equation}
  \Phi_{N,s}
  :=e^{-sH(t_N)}e^{-sH(t_{N-1})}\cdots e^{-sH(t_1)},
  \qquad v_N=\Phi_{N,s}v_0.
  \label{eq:finite-product}
\end{equation}
We begin immediately after the initial task has been fitted, so $v_0\in V(0)$.  This convention
focuses the analysis on the learning progress contributed by the $N$ subsequent tasks.

Decompose the initial error orthogonally as
\[
  v_0=v_0^\parallel+v_0^\perp,
  \qquad
  v_0^\parallel\in\cK,
  \qquad
  v_0^\perp\in\cK^\perp.
\]
Since $\cK\subseteq\ker H(t)$ for every $t$, each update fixes $v_0^\parallel$ and preserves
$\cK^\perp$.  Hence the distances to the common solution set before and after training are
\[
  \dist(w^\star+v_0,w^\star+\cK)=\|v_0^\perp\|,
  \qquad
  \dist(w^\star+\Phi_{N,s}v_0,w^\star+\cK)
  =\|\Phi_{N,s}v_0^\perp\|.
\]
Learning progress therefore depends only on $v_0^\perp$.  To simplify notation, we now rename
$v_0^\perp$ as $v_0$.  Together with the assumption that the initial task has been fitted,
this gives $v_0\in V(0)\cap\cK^\perp$.

The central quantity in our analysis is the reduction in squared distance to the common solution
set after the $N$ subsequent tasks:
\begin{equation}
  \cP_{N,s}(v_0)=\|v_0\|^2-\|\Phi_{N,s}v_0\|^2\ge0,
  \qquad v_0\in V(0)\cap\cK^\perp.
  \label{eq:progress}
\end{equation}

This distance has a uniform loss interpretation over the task path.  Define the worst population
loss of a model $w$ along the path by
\[
  \mathcal{R}_{\mathrm{path}}(w):=\sup_{t\in[0,1]}\cL_t(w).
\]

\begin{proposition}[Loss interpretation of the learning-progress measure]
\label{prop:pathwise-risk-equivalence}
Suppose $\cK^\perp\ne\{0\}$, and let
\[
  \bar H:=\int_0^1 H(t)\,dt,
  \qquad
  c_H:=\frac12\lambda_{\min}\!\left(\bar H\big|_{\cK^\perp}\right).
\]
Then $c_H>0$ and, for every $w\in\R^d$,
\begin{equation}
  c_H\,\dist(w,w^\star+\cK)^2
  \le \mathcal{R}_{\mathrm{path}}(w)
  \le \frac{L}{2}\,\dist(w,w^\star+\cK)^2.
  \label{eq:pathwise-risk-equivalence}
\end{equation}
Consequently, $\dist(w_m,w^\star+\cK)\to0$ for a sequence $(w_m)$ if and only if
$\sup_{t\in[0,1]}\cL_t(w_m)\to0$.
\end{proposition}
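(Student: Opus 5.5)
We reduce everything to the error vector. Write $v=w-w^\star$ and decompose $v=v^\parallel+v^\perp$ with $v^\parallel\in\cK$ and $v^\perp\in\cK^\perp$. Since $\cK\subseteq\ker H(t)$ for every $t$, we have $H(t)v=H(t)v^\perp$. Moreover, $\cK^\perp$ is invariant under the symmetric matrix $H(t)$. Hence $\cL_t(w)=\tfrac12 (v^\perp)^\top H(t)v^\perp$ and $\dist(w,w^\star+\cK)=\|v^\perp\|$. Both inequalities in \cref{eq:pathwise-risk-equivalence} are therefore statements about quadratic forms on $\cK^\perp$, and it suffices to take $v\in\cK^\perp$.

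\textbf{Upper bound.} The positive eigenvalues of $H(t)$ are at most $L$ by \cref{eq:spectral-gap}, so $H(t)\preceq L I$ for every $t$. This gives $\cL_t(w)\le \tfrac L2\|v^\perp\|^2$. Taking the supremum over $t$ gives the right inequality.

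\textbf{Positivity of $c_H$.} The integral $\bar H$ is well defined because $H$ is continuous. It is positive semidefinite and leaves $\cK^\perp$ invariant, so $c_H$ is the minimum of $\tfrac12 z^\top\bar H z$ over unit vectors $z\in\cK^\perp$. This set is compact and nonempty because $\cK^\perp\ne\{0\}$, so the minimum is attained at some $z$. Suppose $z^\top\bar Hz=\int_0^1 z^\top H(t)z\,dt=0$. The integrand is continuous and nonnegative by \cref{eq:psd-explanation}, so it vanishes for every $t$. Positive semidefiniteness then gives $H(t)z=0$, that is, $z\in V(t)$ for all $t$. Hence $z\in\cK\cap\cK^\perp=\{0\}$, contradicting $\|z\|=1$. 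Therefore $c_H>0$. The one point needing care is the use of continuity of $H$ to pass from a zero integral to a pointwise zero. The rest is routine.

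\textbf{Lower bound.} A supremum dominates an average, so
\[
\sup_t\cL_t(w)\ge\int_0^1\cL_t(w)\,dt=\tfrac12 (v^\perp)^\top\bar H v^\perp\ge c_H\|v^\perp\|^2 .
\]
The final equivalence follows immediately from the two-sided bound.
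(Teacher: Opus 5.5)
Your proposal is correct and follows the paper's proof essentially step for step: you reduce to $v^\perp$, get the upper bound from $H(t)\preceq LI$, prove $c_H>0$ by the continuity-plus-positive-semidefiniteness argument that a vanishing integral forces $H(t)z=0$ for all $t$, and get the lower bound because the supremum dominates the average. The only cosmetic difference is that the paper phrases positivity as $\ker\bar H=\cK$, whereas you use a minimizer on the unit sphere of $\cK^\perp$; the two are equivalent.
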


Thus squared distance to the common solution set is equivalent, up to path-dependent constants, to
the worst population loss along the task path.  \Cref{eq:progress} records the reduction of this
state measure along the task sequence.  The proof is in \cref{app:pathwise-risk-equivalence}.

\begin{proposition}[Exact decomposition of learning progress]
\label{prop:exact-progress-decomposition}
For every $N$ and $s\in[0,\infty]$,
\begin{equation}
  \cP_{N,s}(v_0)
  =\sum_{j=1}^N
    \left\langle v_{j-1},
    \bigl(I-e^{-2sH(t_j)}\bigr)v_{j-1}\right\rangle.
  \label{eq:exact-progress-decomposition}
\end{equation}
\end{proposition}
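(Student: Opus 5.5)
The plan is to telescope the squared norms along the trajectory and then use the fact that each update is a symmetric operator. Write $v_j=e^{-sH(t_j)}v_{j-1}$ for $j=1,\ldots,N$, so that $v_N=\Phi_{N,s}v_0$. The progress then splits as
\[
  \cP_{N,s}(v_0)=\|v_0\|^2-\|v_N\|^2=\sum_{j=1}^N\bigl(\|v_{j-1}\|^2-\|v_j\|^2\bigr).
\]
It therefore suffices to identify each summand with the corresponding term of \cref{eq:exact-progress-decomposition}.

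For finite $s$, $H(t_j)$ is symmetric, so $E_j:=e^{-sH(t_j)}$ is symmetric as well. Since $H(t_j)$ commutes with itself, we also have $E_j^\top E_j=E_j^2=e^{-2sH(t_j)}$. Hence
\[
  \|v_j\|^2=\langle E_jv_{j-1},E_jv_{j-1}\rangle=\langle v_{j-1},e^{-2sH(t_j)}v_{j-1}\rangle,
\]
and subtracting this from $\|v_{j-1}\|^2=\langle v_{j-1},v_{j-1}\rangle$ gives exactly the $j$-th summand $\langle v_{j-1},(I-e^{-2sH(t_j)})v_{j-1}\rangle$.

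The case $s=\infty$ is the only point that needs care, since we must decide what $e^{-2\infty H(t)}$ means. By the convention $e^{-\infty H(t)}:=P(t)$, I would interpret $e^{-2\infty H(t)}$ the same way, namely as $P(t)$. This is consistent with the limit $s\to\infty$: in the eigenbasis of $H(t)$, $e^{-2sH(t)}\to P(t)$ because the positive eigenvalues satisfy $\lambda\ge\mu>0$. The identity then again reduces to $P_j^\top P_j=P_j^2=P_j$, which holds because $P_j$ is an orthogonal projection. So the same computation gives $\|v_j\|^2=\langle v_{j-1},P_jv_{j-1}\rangle$.

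Alternatively, the finite-$s$ identity can be passed to the limit termwise, since $N$ is fixed and each factor converges. No step is a genuine obstacle; the only subtle point is stating this convention for $s=\infty$ explicitly.
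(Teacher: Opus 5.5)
Your proposal is correct and follows the paper's proof: telescope $\|v_{j-1}\|^2-\|v_j\|^2$ and use the symmetry of $e^{-sH(t_j)}$ to write $\|v_j\|^2=\langle v_{j-1},e^{-2sH(t_j)}v_{j-1}\rangle$. Your explicit treatment of $s=\infty$, using the convention $e^{-\infty H(t)}=P(t)$ together with $P(t)^2=P(t)$, fills in a case that the paper leaves implicit.
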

\begin{proof}
Since $H(t_j)$ is symmetric, \cref{eq:task-update} gives
\[
  \|v_{j-1}\|^2-\|v_j\|^2
  =\left\langle v_{j-1},
    \bigl(I-e^{-2sH(t_j)}\bigr)v_{j-1}\right\rangle.
\]
Summing over $j$ telescopes to \cref{eq:progress}.
\end{proof}

Each summand is exactly the reduction in squared error produced on one task.
\Cref{eq:exact-progress-decomposition} exposes two coupled roles of the per-task training time.  A
larger $s$ removes more error on the current task, while also changing the state inherited by every
later task.  Not fitting the current task completely leaves a residual on which subsequent, slightly
different tasks can act.  Across a dense task sequence, these later reductions can accumulate and
produce more progress over the full path than exact fitting.  \Cref{sec:results} quantifies this
effect.

All subsequent results characterize how $\cP_{N,s}(v_0)$ depends on the task count $N$ and the
per-task training time $s$.

For each task count $N$, define an optimal per-task training time by
\begin{equation}
  s_N^\star(v_0)\in\argmaxop_{s\in[0,\infty]}\cP_{N,s}(v_0).
  \label{eq:finite-optimizer}
\end{equation}
Because $\Phi_{N,s}$ has a limit as $s\to\infty$, the maximum in
\cref{eq:finite-optimizer} is attained.

\section{Training-Time Scaling in Gradual Adaptation}
\label{sec:results}

We first pass from the discrete task sequence to a continuum learning curve indexed by the effective
training time $\tau=Ns_N$.  We then characterize its small- and large-budget limits and use them to
identify the optimal scaling of the per-task training time.

\subsection{The limiting learning curve}

For an effective training time $\tau\ge0$, let $x_\tau:[0,1]\to\R^d$ solve
\begin{equation}
  \dot x_\tau(t)=-\tau H(t)x_\tau(t),
  \qquad x_\tau(0)=v_0,
  \label{eq:main-continuum-flow}
\end{equation}
and define
\begin{equation}
  \cP_\tau(v_0):=\|v_0\|^2-\|x_\tau(1)\|^2.
  \label{eq:main-continuum-progress}
\end{equation}

\begin{theorem}[Limit of dense task sequences]
\label{thm:continuum-main}
Under the assumptions of \cref{sec:setup}, if $Ns_N\to\tau\in[0,\infty)$, then
\begin{equation}
  \Phi_{N,s_N}v_0\longrightarrow x_\tau(1),
  \qquad
  \cP_{N,s_N}(v_0)\longrightarrow\cP_\tau(v_0).
  \label{eq:main-continuum-limit}
\end{equation}
For every $T<\infty$, the finite-sequence approximation satisfies
\begin{align}
  \sup_{0\le\tau\le T}
  \left\|\Phi_{N,\tau/N}v_0-x_\tau(1)\right\|
  &=O_T(N^{-1}),
  \label{eq:main-continuum-rate}\\
  \sup_{0\le\tau\le T}
  \left|\cP_{N,\tau/N}(v_0)-\cP_\tau(v_0)\right|
  &=O_T(N^{-1}).
  \label{eq:main-progress-rate}
\end{align}
Moreover,
\begin{equation}
  \cP_\tau(v_0)
  =2\tau\int_0^1x_\tau(t)^\top H(t)x_\tau(t)\,dt.
  \label{eq:main-continuum-energy}
\end{equation}
\end{theorem}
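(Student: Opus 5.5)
The plan is to treat the product $\Phi_{N,\tau/N}$ as a one-step numerical integrator for the linear ODE \cref{eq:main-continuum-flow}, and to control its global error by a standard stability-plus-consistency argument.

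\textbf{Local error.} Let $U_\tau(t,r)$ denote the propagator of $\dot x=-\tau H(t)x$ from time $r$ to time $t$, and put $h=1/N$. I will compare one factor $e^{-\tau h H(t_j)}$ with the exact step $U_\tau(t_j,t_{j-1})$. Expanding the exact step in a Magnus/Taylor series, its generator is $-\tau\int_{t_{j-1}}^{t_j}H(r)\,dr+O(\tau^2h^3)$. This uses that $H$ is $C^1$ with bounded derivative; $C^3$ is more than enough. The integral differs from $hH(t_j)$ by $O(h^2\|H'\|_\infty)$, since the rule is right-endpoint rather than midpoint. Hence the local error is $O_T(h^2)$, uniformly in $\tau\le T$.

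\textbf{Stability and global error.} Every factor is a contraction, since $\|e^{-sH}\|\le1$ for $H\succeq0$. The propagators $U_\tau$ are contractions as well, because $\frac{d}{dt}\|x\|^2=-2\tau x^\top Hx\le0$. The telescoping identity
\[
\Phi_{N,\tau/N}-U_\tau(1,0)=\sum_{j=1}^N \Bigl(\prod_{i>j}e^{-\tau hH(t_i)}\Bigr)\bigl(e^{-\tau hH(t_j)}-U_\tau(t_j,t_{j-1})\bigr)U_\tau(t_{j-1},0)
\]
then bounds the global error by $N\cdot O_T(h^2)=O_T(N^{-1})$, uniformly in $\tau\in[0,T]$. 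This gives \cref{eq:main-continuum-rate}.

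\textbf{Progress rate.} The bound \cref{eq:main-progress-rate} follows from $\bigl|\|a\|^2-\|b\|^2\bigr|\le(\|a\|+\|b\|)\|a-b\|\le2\|v_0\|\,\|a-b\|$.

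\textbf{General sequences $Ns_N\to\tau$.} Write $\tau_N=Ns_N$, so that $\Phi_{N,s_N}=\Phi_{N,\tau_N/N}$. By the uniform estimate on $[0,T]$ with $T=\sup_N\tau_N<\infty$, it suffices to show that $x_{\tau_N}(1)\to x_\tau(1)$. This is continuity of the linear ODE solution in the parameter $\tau$, which follows from Grönwall's inequality applied to $x_{\tau_N}-x_\tau$. The identity $\|x_\tau\|\le\|v_0\|$ keeps the forcing term $(\tau_N-\tau)H x_\tau$ bounded by $|\tau_N-\tau|L\|v_0\|$. Convergence of $\cP_{N,s_N}$ then follows from the same norm inequality.

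\textbf{Energy identity.} Differentiating,
\[
\frac{d}{dt}\|x_\tau(t)\|^2=-2\tau x_\tau^\top Hx_\tau,
\]
and integrating from $0$ to $1$ gives \cref{eq:main-continuum-energy}. It is also the continuum analogue of \cref{prop:exact-progress-decomposition}, since $I-e^{-2sH}\approx 2sH$.

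\textbf{Main obstacle.} The only delicate point is making the local error bound uniform in $\tau\in[0,T]$ with constants independent of $N$. I would do this directly rather than through the Magnus series. Write the step difference via Duhamel's formula: $\frac{d}{dr}$ of $e^{-\tau(t_j-r)H(t_j)}U_\tau(r,t_{j-1})$ equals $e^{-\tau(t_j-r)H(t_j)}\,\tau\bigl(H(t_j)-H(r)\bigr)U_\tau(r,t_{j-1})$. Both outer factors are contractions, so integrating over $r\in[t_{j-1},t_j]$ bounds the step error by $\tau\|H'\|_\infty h^2/2$. This is explicit and uniform in $\tau\le T$, which settles the estimate cleanly.
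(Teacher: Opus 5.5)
Your proposal is correct and follows essentially the same route as the paper. Both arguments combine an $O_T(h^2)$ one-step consistency bound with the contractivity of the discrete factors and of the continuum flow, sum to $O_T(N^{-1})$, and then use the norm-difference bound for progress, continuity in $\tau$ for general $Ns_N\to\tau$, and differentiation of $\|x_\tau(t)\|^2$. Your operator-level Duhamel bound $\tau\|H'\|_\infty h^2/2$ and telescoping product are an explicit version of the paper's Taylor-based defect estimate and its recursion $\|e_j\|\le\|e_{j-1}\|+C_Th^2$.
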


\paragraph{Proof sketch.}
Put $b_N=Ns_N$.  Since $b_N\to\tau$, each update satisfies
\[
  e^{-s_NH(t_j)}
  =I-\frac{b_N}{N}H(t_j)+O(N^{-2})
\]
uniformly along the path.  The one-step error relative to \cref{eq:main-continuum-flow} is
$O(N^{-2})+O(|b_N-\tau|/N)$.  Because every update is a contraction, these errors are not amplified
by the remaining product.  After $N$ steps the endpoint error is therefore
$O(N^{-1})+O(|b_N-\tau|)$, which tends to zero.  When $b_N=\tau$ on a compact interval, this also
gives \cref{eq:main-continuum-rate}.  The progress rate follows because all trajectories are
uniformly bounded.  Differentiating $\|x_\tau(t)\|^2$ and integrating from $0$ to $1$ gives
\cref{eq:main-continuum-energy}.  Complete estimates are in \cref{app:technical}.

\subsection{Optimal per-task training time}

The condition $v_0\in V(0)\cap\cK^\perp$ means that the initial point fits the source task and that
components invisible to every task have been removed.  Hence a nonzero $v_0$ is detected by at
least one later task.

\begin{theorem}[Optimal per-task training time]
\label{thm:optimal-main}
Let $v_0\in V(0)\cap\cK^\perp$ be nonzero.  Then:
\begin{enumerate}[label=(\roman*)]
  \item The limiting progress is strictly positive for every finite $\tau>0$ and satisfies
  \begin{equation}
    \cP_\tau(v_0)=\Theta(\tau)\quad(\tau\downarrow0),
    \qquad
    \cP_\tau(v_0)=\Theta(\tau^{-1})\quad(\tau\to\infty).
    \label{eq:main-progress-tails}
  \end{equation}
  \item For every sequence of per-task training times $s_N$,
  \begin{equation}
  \begin{array}{rcl}
    Ns_N\to0
    &\Longrightarrow&\cP_{N,s_N}(v_0)\to0,\\[1mm]
    Ns_N\to\tau\in(0,\infty)
    &\Longrightarrow&\cP_{N,s_N}(v_0)\to\cP_\tau(v_0)>0,\\[1mm]
    Ns_N\to\infty
    &\Longrightarrow&\cP_{N,s_N}(v_0)\to0.
  \end{array}
  \label{eq:main-three-regimes}
  \end{equation}
  \item The limiting curve attains its maximum at one or more points in $(0,\infty)$.  For all
  sufficiently large $N$, every optimizer in \cref{eq:finite-optimizer} is finite and satisfies
  $s_N^\star(v_0)=\Theta(N^{-1})$, and every accumulation point of $Ns_N^\star(v_0)$ maximizes
  $\cP_\tau(v_0)$.  In particular, if the limiting curve has a unique maximizer
  $\tau^\star(v_0)$, then
  \begin{equation}
    s_N^\star(v_0)
    =\frac{\tau^\star(v_0)}{N}+o(N^{-1}).
    \label{eq:main-unique-optimum}
  \end{equation}
\end{enumerate}
\end{theorem}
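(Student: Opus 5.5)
The plan is to treat the three parts in order, using \cref{thm:continuum-main} as the bridge between the discrete and continuum problems. For part (i), the small-$\tau$ tail comes from the energy identity \cref{eq:main-continuum-energy}. As $\tau\downarrow0$ we have $x_\tau(t)=v_0+O(\tau)$ uniformly, so
\[
  \cP_\tau(v_0)=2\tau\int_0^1 v_0^\top H(t)v_0\,dt+O(\tau^2)=2\tau\,v_0^\top\bar H v_0+O(\tau^2).
\]
Since $v_0\in\cK^\perp$ is nonzero, \cref{prop:pathwise-risk-equivalence} gives $v_0^\top\bar H v_0\ge 2c_H\|v_0\|^2>0$, which yields $\Theta(\tau)$.

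Strict positivity for finite $\tau>0$ also uses \cref{eq:main-continuum-energy}. If $\cP_\tau=0$, then $H(t)x_\tau(t)=0$ for all $t$. Hence $\dot x_\tau\equiv0$, so $x_\tau\equiv v_0$ and $v_0\in\bigcap_t V(t)=\cK$, a contradiction.

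For the large-$\tau$ tail I would use an adiabatic analysis. Write $x_\tau=P(t)x_\tau+Q(t)x_\tau=:p+q$. Differentiating gives
\[
  \dot q=-\tau H q+\dot Q x_\tau.
\]
Because $H\ge\mu$ on $\operatorname{range}Q$ and $\|x_\tau\|\le\|v_0\|$, a Duhamel estimate with $q(0)=0$ (from $v_0\in V(0)$) gives $q=\tau^{-1}H^{+}\dot Q p+O(\tau^{-2})$ uniformly in $t$, where $H^{+}$ is the pseudoinverse. This requires $C^2$ regularity of $Q$, which constant rank together with $H\in C^3$ provides. Substituting into the energy identity gives
\[
  \cP_\tau=2\tau\int q^\top Hq\,dt=\frac{2}{\tau}\int_0^1 p^\top\dot Q H^{+}\dot Q\,p\,dt+O(\tau^{-2}),
\]
where $p(t)\to p_\infty(t)$, the parallel transport of $v_0$ along $V(t)$, i.e.\ the solution of $\dot p=\dot P p$. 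The upper bound $O(\tau^{-1})$ is then immediate. The lower bound requires $\dot Q(t)p_\infty(t)\ne0$ on a set of positive measure. If it vanished identically, then $\dot P p_\infty=0$ (using $\dot Q p=-\dot P p$ together with $Pp=p$), so $p_\infty$ would be constant, equal to $v_0$, and lie in every $V(t)$, hence in $\cK$, again a contradiction. I expect this step to be the main obstacle: making the $O(\tau^{-2})$ remainder uniform, and justifying that $p$ converges to the transported vector with an $O(\tau^{-1})$ error.

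For part (ii), the middle case is \cref{thm:continuum-main} combined with (i), and the case $Ns_N\to0$ follows from the same theorem with $\tau=0$. The case $Ns_N\to\infty$ needs a discrete analogue of the adiabatic bound. I would split on whether $s_N\to\infty$ along the subsequence. If $s_N$ stays bounded, the same Duhamel argument applies to the product \cref{eq:finite-product}, with the per-step contraction $\|e^{-s_NH}Q\|\le e^{-\mu s_N}$. The geometric sum yields $\|Qv_{j}\|\lesssim N^{-1}/(1-e^{-\mu s_N})$, and \cref{prop:exact-progress-decomposition} then gives
\[
  \cP\lesssim N\cdot\frac{N^{-2}}{(1-e^{-\mu s_N})^2}\cdot(1-e^{-2Ls_N}),
\]
which is $O\bigl((Ns_N)^{-1}+N^{-1}\bigr)\to0$. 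The case $s_N=\infty$ is the same estimate, with projections giving $\cP=O(N^{-1})$. In either case, each step's inherited component off $V(t_j)$ is only $O(N^{-1})$, because $\|Q(t_j)P(t_{j-1})\|=O(N^{-1})$.

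For part (iii), the maximum of the limiting curve exists because $\cP_\tau$ is continuous, positive on $(0,\infty)$, and tends to $0$ at both ends. For the finite problem, pick any subsequence of optimizers $s_N^\star$. Comparing with $s=\tau^\star/N$ shows that $\liminf\cP_{N,s_N^\star}\ge\max\cP_\tau>0$. By (ii), no further subsequence can have $Ns_N^\star\to0$ or $\infty$ (including $s_N^\star=\infty$). Hence $Ns_N^\star$ is eventually bounded above and away from zero, i.e.\ $s_N^\star=\Theta(N^{-1})$, and every accumulation point $\tau'$ satisfies $\cP_{\tau'}\ge\max\cP_\tau$. The uniqueness statement \cref{eq:main-unique-optimum} follows directly.
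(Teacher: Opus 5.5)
Your plan is sound. For the small-$\tau$ tail, strict positivity, the first two regimes of (ii), and part (iii), it coincides with the paper's proof.

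\textbf{Large-$\tau$ tail: your frame versus the paper's.} You use the moving projections $P(t),Q(t)$, whereas the paper conjugates by Kato transport into the fixed splitting $V(0)\oplus V(0)^\perp$ (\cref{lem:continuous-tracking}). The two are equivalent. Since $U(t)v_0$ solves $\dot z=\dot Pz$, your $p_\infty$ is $U(t)v_0$. Your coefficient $\int_0^1p_\infty^\top\dot QH^{+}\dot Qp_\infty\,dt$ equals $\langle v_0,\cG_Hv_0\rangle$ by \eqref{eq:GH-ambient}, and your nondegeneracy argument is the paper's kernel argument.

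What the fixed frame buys is a genuine variation-of-constants formula, with propagator bounded by $e^{-\mu\tau(t-r)}$. In your frame the propagator of $-\tau H(t)$ does not preserve $\operatorname{range}Q(t)$, so your ``Duhamel'' step should be run as a norm inequality. Since $Q\dot QQ=0$, the part $\dot Qq$ of the forcing $\dot Qx$ is orthogonal to $q$, which gives $\tfrac{d}{dt}\|q\|\le-\mu\tau\|q\|+\|\dot Qp\|$. The same device works for $\widetilde r=q-\tau^{-1}H^{+}\dot Qp$, whose only remaining forcing is $-\tau^{-1}\tfrac{d}{dt}(H^{+}\dot Qp)$.

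\textbf{One intermediate claim is false.} The remainder is not $O(\tau^{-2})$ uniformly in $t$. You have $q(0)=0$, but $\tau^{-1}H^{+}(0)\dot Q(0)v_0$ is generally of exact order $\tau^{-1}$. In the rotating example of \cref{app:numerical-details} it equals $\frac{\pi}{2\tau}e_1$ with $e_1=(1,0)^\top$. The norm inequality gives the sharp bound $\|\widetilde r(t)\|\le C\tau^{-1}e^{-\mu\tau t}+C\tau^{-2}$, that is, an initial layer of width $O(\tau^{-1})$.

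So the step you flagged as the main obstacle cannot be carried out, but it is also unnecessary. We have $\int_0^1\|\widetilde r\|\,dt=O(\tau^{-2})$ and $\int_0^1\|\widetilde r\|^2dt=O(\tau^{-3})$. Hence the cross and quadratic terms in $2\tau\int_0^1q^\top Hq\,dt$ are both $O(\tau^{-2})$, and your expansion $\cP_\tau(v_0)=\frac2\tau\langle v_0,\cG_Hv_0\rangle+O(\tau^{-2})$ stands. This is exactly why the paper states its remainder only in $L^2$, \eqref{eq:r-tracking}.

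\textbf{The regime $Ns_N\to\infty$.} Here your argument is genuinely different from the paper's and simpler. The paper runs a block recursion in the Kato frame plus a discrete Gronwall estimate on the tangential coordinate (\cref{lem:discrete-tracking}). You instead bound each summand of \cref{prop:exact-progress-decomposition} by $(1-e^{-2Ls_N})\|Q(t_j)v_{j-1}\|^2$. You then control the normal component by $\|Q(t_j)v_j\|\le e^{-\mu s_N}(\|Q(t_{j-1})v_{j-1}\|+CN^{-1})$, exploiting that per-task progress is quadratic in that component.

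By concavity of $1-e^{-x}$, $1-e^{-2Ls}\le(2L/\mu)(1-e^{-\mu s})$. So your bound is exactly $O\bigl(1/(N(1-e^{-\mu s_N}))\bigr)$, uniformly in $s\in(0,\infty]$, and your case split on whether $s_N$ stays bounded is unnecessary. The paper's heavier recursion pays off elsewhere: it also yields the first-order coefficients $\cG_s$ and $\cG_H$ of \cref{prop:long-training}. Your inequality does not give these, but \cref{thm:optimal-main} does not need them.
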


In particular, fitting every task to convergence belongs to the third regime in
\cref{eq:main-three-regimes}: its progress tends to zero as $N\to\infty$.  By contrast,
$s_N=\tau/N$ with $0<\tau<\infty$ retains strictly positive limiting progress.

\paragraph{Proof idea.}
The distinction is easiest to see by comparing complete trajectories rather than a single update.
See \cref{fig:partial-fitting-mechanism}.  Put $h=N^{-1}$.  After exact fitting,
$v_j\in V(t_j)$.  Because adjacent zero-loss spaces differ by only $O(h)$, the next task exposes
only an $O(h)$ normal component.  Projecting this component away turns the vector by $O(h)$, but
changes its squared length by only $O(h^2)$.  Across $N=h^{-1}$ tasks, the vector therefore mainly
follows the moving zero-loss space along an almost constant-radius path, with only $O(N^{-1})$
total progress.

\begin{figure}[H]
  \centering
  \scalebox{\mechanismfigscale}{%
  \begin{tikzpicture}[
      x=1cm,y=1cm,
      task/.style={blue!65!black,very thick},
      exact/.style={orange!85!black,very thick,->},
      exactpath/.style={orange!85!black,thick},
      finite/.style={green!45!black,very thick,->},
      finitepath/.style={green!45!black,thick},
      lab/.style={font=\scriptsize,align=center},
      every node/.style={inner sep=1.2pt}]
    \draw[exactpath] (2.70,2.15)--(3.12,2.15);
    \node[lab,anchor=west] at (3.18,2.15) {exact fitting};
    \draw[finitepath] (5.00,2.15)--(5.42,2.15);
    \node[lab,anchor=west] at (5.48,2.15) {$s_N=\tau^\star/N$};
    \draw[task] (8.00,2.15)--(8.42,2.15);
    \node[lab,anchor=west] at (8.48,2.15) {$V(t)$};

    \begin{scope}[shift={(0,0)}]
      \begin{scope}[shift={(1.85,0)},scale=1.70]
        \draw[task] (0,0)--(-0.575,0.996);
        \draw[exactpath] plot[smooth] coordinates
          {(0.0000,1.0000) (-0.1305,0.9914) (-0.2588,0.9659)
           (-0.3827,0.9239) (-0.5000,0.8660)};
        \draw[finitepath] plot[smooth] coordinates
          {(0.0000,1.0000) (-0.0112,0.9990) (-0.0417,0.9928)
           (-0.0862,0.9775) (-0.1396,0.9510)};
        \draw[exact] (0,0)--(-0.5000,0.8660);
        \draw[finite] (0,0)--(-0.1396,0.9510);
        \fill[black] (0,0) circle (1.1pt);
      \end{scope}
      \node[lab] at (0.88,-0.43) {(a) $t=1/3$};
    \end{scope}

    \begin{scope}[shift={(4.55,0)}]
      \begin{scope}[shift={(1.85,0)},scale=1.70]
        \draw[task] (0,0)--(-0.996,0.575);
        \draw[exactpath] plot[smooth] coordinates
          {(0.0000,1.0000) (-0.1305,0.9914) (-0.2588,0.9659)
           (-0.3827,0.9239) (-0.5000,0.8660) (-0.6088,0.7934)
           (-0.7071,0.7071) (-0.7934,0.6088) (-0.8660,0.5000)};
        \draw[finitepath] plot[smooth] coordinates
          {(0.0000,1.0000) (-0.0112,0.9990) (-0.0417,0.9928)
           (-0.0862,0.9775) (-0.1396,0.9510) (-0.1971,0.9125)
           (-0.2543,0.8623) (-0.3074,0.8016) (-0.3536,0.7325)};
        \draw[exact] (0,0)--(-0.8660,0.5000);
        \draw[finite] (0,0)--(-0.3536,0.7325);
        \fill[black] (0,0) circle (1.1pt);
      \end{scope}
      \node[lab] at (0.88,-0.43) {(b) $t=2/3$};
    \end{scope}

    \begin{scope}[shift={(9.10,0)}]
      \begin{scope}[shift={(1.85,0)},scale=1.70]
        \draw[task] (0,0)--(-1.15,0);
        \draw[exactpath] plot[smooth] coordinates
          {(0.0000,1.0000) (-0.1305,0.9914) (-0.2588,0.9659)
           (-0.3827,0.9239) (-0.5000,0.8660) (-0.6088,0.7934)
           (-0.7071,0.7071) (-0.7934,0.6088) (-0.8660,0.5000)
           (-0.9239,0.3827) (-0.9659,0.2588) (-0.9914,0.1305)
           (-1.0000,0.0000)};
        \draw[finitepath] plot[smooth] coordinates
          {(0.0000,1.0000) (-0.0112,0.9990) (-0.0417,0.9928)
           (-0.0862,0.9775) (-0.1396,0.9510) (-0.1971,0.9125)
           (-0.2543,0.8623) (-0.3074,0.8016) (-0.3536,0.7325)
           (-0.3907,0.6573) (-0.4175,0.5784) (-0.4334,0.4986)
           (-0.4385,0.4202)};
        \draw[exact] (0,0)--(-1.0000,0.0000);
        \draw[finite] (0,0)--(-0.4385,0.4202);
        \fill[black] (0,0) circle (1.1pt);
      \end{scope}
      \node[lab] at (0.88,-0.43) {(c) $t=1$};
    \end{scope}
  \end{tikzpicture}}
  \caption{Limiting trajectories for the rotating-task example in \cref{sec:numerical}.
  All panels use the same scale.  The black dot is the common solution and the blue ray is the
  current zero-loss space.  Exact fitting follows this space along the outer orange arc without
  approaching the dot.  At the optimal scale $s_N=\tau^\star/N$, finite training lags behind the
  current zero-loss space but moves steadily inward.  By the final task it is visibly closer to the
  common solution.}
  \label{fig:partial-fitting-mechanism}
\end{figure}

With $s_N=\tau h$, no task removes its visible component completely:
\[
  e^{-s_NH(t_j)}v_{j-1}
  =v_{j-1}-\tau hH(t_j)v_{j-1}+O(h^2).
\]
The remaining component need not shrink to $O(h)$ before the next task arrives.  Consequently the
$O(h)$ decreases in squared length can accumulate over $N=h^{-1}$ tasks, giving the nonzero limit
in \cref{eq:main-continuum-energy}.  In the figure this is the difference between merely
following the outer arc and moving inward.

The two failures occur on opposite sides of this balance.  If $Ns_N\to0$, the total amount of
training vanishes.  If $Ns_N\to\infty$, the model is driven increasingly close to each current
zero-loss space.  The appendix proves the uniform estimate
\[
  \cP_{N,s_N}(v_0)
  =O\!\left(\frac{1}{N(1-e^{-\mu s_N})}\right).
\]
Its right-hand side tends to zero whenever $Ns_N\to\infty$.
Fix any finite $\bar\tau>0$.  The choice $s_N=\bar\tau/N$ gives strictly positive limiting progress.
The two tail bounds rule out $Ns_N^\star\to0$ and $Ns_N^\star\to\infty$, so every optimizer satisfies
$Ns_N^\star=\Theta(1)$.  Uniform convergence on this bounded range then implies that every
accumulation point of $Ns_N^\star$ maximizes $\cP_\tau(v_0)$.  Complete expansions and uniform
remainder estimates are given in
\cref{app:tail-details,app:finite-time-proof}.

\paragraph{First-order comparison at fixed per-task time.}
The main theorem concerns $s_N=\Theta(N^{-1})$, for which progress remains of constant order.  A
different question is what happens when the per-task time is fixed as $N$ grows.  Let
$C_s(v_0)=2\langle v_0,\cG_sv_0\rangle$ and
$C_H(v_0)=2\langle v_0,\cG_Hv_0\rangle$, where the explicit path-dependent operators are defined in
\cref{app:tail-details,app:finite-time-proof}.

\begin{proposition}[First-order comparison in the overtraining regime]
\label{prop:long-training}
For $v_0\in V(0)\cap\cK^\perp$ and every fixed $s\in(0,\infty]$,
\begin{equation}
  \cP_{N,s}(v_0)
  =\frac{C_s(v_0)}{N}+o(N^{-1}).
  \label{eq:main-fixed-s}
\end{equation}
If $s_N\downarrow0$ and $Ns_N\to\infty$, then
\begin{equation}
  \cP_{N,s_N}(v_0)
  =\frac{C_H(v_0)}{Ns_N}+o((Ns_N)^{-1}).
  \label{eq:main-small-s-long-time}
\end{equation}
Moreover, $C_s(v_0)>C_\infty(v_0)$ for every fixed finite $s$.  Hence
$\cP_{N,s}(v_0)>\cP_{N,\infty}(v_0)$ for all sufficiently large $N$.
\end{proposition}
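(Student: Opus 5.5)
We would work directly with the exact decomposition of \cref{prop:exact-progress-decomposition} and split each iterate into its component in the current zero-loss space and its normal component. Put $h=N^{-1}$ and write $v_{j-1}=P(t_j)v_{j-1}+Q(t_j)v_{j-1}$. Only $Q(t_j)v_{j-1}$ enters the summand, so
\[
  \cP_{N,s}(v_0)=\sum_{j=1}^N\bigl\langle Q(t_j)v_{j-1},\bigl(I-e^{-2sH(t_j)}\bigr)Q(t_j)v_{j-1}\bigr\rangle .
\]
The first step is a recursion for the normal component $r_j:=Q(t_{j+1})v_j$. Because $e^{-sH(t_j)}$ commutes with $Q(t_j)$, we have
\[
  r_j=Q(t_{j+1})e^{-sH(t_j)}Q(t_j)v_{j-1}+\bigl(Q(t_{j+1})-Q(t_j)\bigr)P(t_j)v_{j-1}.
\]
The first term contracts by $e^{-\mu s}$. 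The second is $h\dot Q(t_j)P(t_j)v_{j-1}+O(h^2)$, and it is driven by the slowly varying tangential part. Since $v_0\in V(0)$, we have $r_0=O(h)$. Summing this geometric recursion gives $\|r_j\|=O\bigl(h/(1-e^{-\mu s})\bigr)$, uniformly in $j$. Since $\|v_j\|\le\|v_0\|$, the decomposition then yields the a priori bound $\cP=O\bigl(1/(N(1-e^{-\mu s}))\bigr)$ quoted in the text. This bound also shows that the tangential part $P(t_j)v_j$ changes by only $O(h)$ per step. Over the full path it therefore follows the parallel-transport ODE $\dot p=\dot P p$ up to $o(1)$. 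Call that solution $p(t)$, with $p(0)=v_0$.

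For fixed $s\in(0,\infty]$, the second step linearizes the recursion. Freezing coefficients on mesoscopic blocks of length $\gg h$ but $o(1)$, we get $r_j\approx h\sum_{k\ge0}(e^{-sH}Q)^k\dot Q\,p$ evaluated at $t_j$. This is justified by the smoothness of $H$ and $Q$, which comes from the $C^3$ and constant-rank assumptions, together with the uniform contraction. The infinite sum equals $(I-e^{-sH(t)})^{+}\dot Q(t)p(t)$ on the range of $Q(t)$. Substituting into the decomposition, each summand is $h^2\langle g_s(t_j)p(t_j),p(t_j)\rangle+o(h^2)$, where
\[
  g_s=\dot P\,(I-e^{-sH})^{+}(I-e^{-2sH})(I-e^{-sH})^{+}\dot P
  =\dot P\,\frac{1+e^{-sH}}{1-e^{-sH}}\,\dot P .
\]
Here we use $\dot Q P=-\dot P P$. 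A Riemann sum then gives \cref{eq:main-fixed-s}, with $C_s=2\langle v_0,\cG_s v_0\rangle$ and $\cG_s:=\frac12\int_0^1 \mathcal T(t)^\top g_s(t)\mathcal T(t)\,dt$, where $\mathcal T$ is the transport map. For $s=\infty$ the multiplier is $1$, consistent with the picture of a rotation by $O(h)$ per step.

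Since $\frac{1+e^{-s\lambda}}{1-e^{-s\lambda}}>1$ for every finite $s$ and $\lambda\ge\mu$, we get $g_s-g_\infty=\dot P\,\frac{2e^{-sH}}{1-e^{-sH}}\,\dot P\succeq0$. Hence $C_s(v_0)\ge C_\infty(v_0)$. Strict inequality needs $\dot Q(t)p(t)\ne0$ on a set of positive measure. If instead $\dot P p\equiv0$, then $p$ is constant, so $v_0\in V(t)$ for all $t$, which means $v_0\in\cK$. That contradicts $v_0\in\cK^\perp\setminus\{0\}$.

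For the regime $s_N\downarrow0$ with $Ns_N\to\infty$, we use the same argument with $e^{-sH}\approx I-sH$. The resolvent becomes $(sH)^{+}$, and the multiplier becomes $\frac{2}{sH}$ to leading order. This yields $\cP\approx\frac{1}{Ns_N}\cdot 2\langle v_0,\cG_H v_0\rangle$ with $g_H=\dot P H^{+}\dot P$, which is \cref{eq:main-small-s-long-time}. The block-freezing now needs blocks much longer than the relaxation length $1/(Ns_N)$ (measured in $t$) but still $o(1)$. That is possible exactly because $Ns_N\to\infty$.

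The main obstacle is controlling the error in the quasi-static approximation for $r_j$ uniformly. The error must be $o(h)$ per step, and in the second regime $o\bigl(h/(Ns_N)\bigr)$ relative to the leading term. The same holds for the drift of $P(t_j)v_j$ from $p(t_j)$. We would first try a discrete Duhamel/variation-of-constants formula for $r_j$. Its terms would be bounded using Lipschitz continuity of $t\mapsto(\dot Q p)(t)$ and the geometric decay $e^{-\mu s k}$. This gives a remainder of order $h\cdot(\text{relaxation time})\cdot h$, which is $o(h)$ in both regimes. Finally, the comparison $\cP_{N,s}>\cP_{N,\infty}$ for large $N$ follows from $C_s>C_\infty$ and \cref{eq:main-fixed-s}.
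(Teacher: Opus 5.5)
Your argument is correct, but it takes a different route from the paper.

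\textbf{Comparison with the paper's proof.} The paper (proof of \cref{lem:discrete-tracking}) works in the Kato frame $(p_j,q_j)=U(t_j)^\top v_j$. It expands $U_j^\top U_{j-1}$ to second order, shows $q_{j-1}\approx hL_jp_{j-1}$ with $L_j=-(I-R_j)^{-1}R_jB_j$, and reads the progress off the endpoint norm $\|p_N\|^2+\|q_N\|^2$. In that route the weight $W_j=\frac12(I+R_j)(I-R_j)^{-1}$ appears only after the coupling $hB_j^\top q_{j-1}$ is combined with the second-order term $-\frac{h^2}{2}B_j^\top B_jp_{j-1}$. So $p_N$ must be tracked to accuracy $o(h)$.

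You instead sum the exact per-task decrements of \cref{prop:exact-progress-decomposition}. This needs the normal component only to first order and the tangential component only to $o(1)$, so no second-order frame expansion is required. The weight then appears directly as $(1-\rho^2)/(1-\rho)^2=(1+\rho)/(1-\rho)=\coth(s\lambda/2)$, which makes \cref{eq:main-residual-weight} transparent.

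Your operators coincide with the paper's. Your transport map is $U(t)$, and $U(t)B(t)v=\dot P(t)U(t)v$ for $v\in V(0)$. Hence $\frac12\int_0^1U^\top g_sU\,dt$ restricted to $V(0)$ equals $\int_0^1B^\top\cW_sB\,dt=\cG_s$, and similarly for $\cG_H$.

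What the paper's route buys is the endpoint operator expansion \cref{eq:appendix-fixed-s-operator}, which is reused in \cref{app:anisotropic-counterexample}. Your dissipation identity is tied to the Euclidean norm and does not produce that expansion. The strictness argument is the same in both.

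\textbf{Mis-scaled estimates.} Two estimates in your sketch are off, though neither breaks the argument.

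First, when $s_N\downarrow0$, the quasi-static profile $\tilde r_j\asymp h/s_N$ drifts by $O(h^2/s_N)$ per step, and the contraction factor is $1-\delta_N$ with $\delta_N\asymp s_N$. After the initial transient, the error $\|r_j-\tilde r_j\|$ is therefore $O(h^2/s_N^2)=O((Ns_N)^{-2})$. This is a factor $s_N^{-1}$ larger than your estimate, and it need not be $o(h)$. What you actually need is $o(h/s_N)$. The reason is that the cross terms in $\sum_j\langle r_{j-1},(I-e^{-2s_NH(t_j)})r_{j-1}\rangle$ total roughly $Ns_N\cdot(h/s_N)\cdot\sup_j\|r_j-\tilde r_j\|=\sup_j\|r_j-\tilde r_j\|$, and the transient adds only $O((Ns_N)^{-2})$. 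This requirement holds precisely because $Ns_N\to\infty$.

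Second, your a priori bound $O(1/(N\delta))$ needs $\|(I-e^{-2sH(t)})Q(t)\|\le1-e^{-2Ls}\lesssim1-e^{-\mu s}$. Using only $\|v_j\|\le\|v_0\|$ gives the weaker $O(1/(N\delta^2))$. The proposition does not use this bound, so the slip is harmless here.
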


The strict comparison has a direct interpretation.  Along a positive-eigenvalue direction of
$H(t)$ with eigenvalue $\lambda$, finite training leaves the fraction $\rho=e^{-s\lambda}$ of the incoming
error.  Its contribution to the first-order coefficient can be written as
\begin{equation}
  \frac12\coth(s\lambda/2)
  =\frac12+\frac{\rho}{1-\rho}.
  \label{eq:main-residual-weight}
\end{equation}
Exact fitting sets $\rho=0$.  Finite fitting adds the positive second term, which records residual
error that later, slightly different tasks can still reduce.  Thus fixed finite fitting and exact
fitting both have progress of order $N^{-1}$, but the finite-time coefficient is strictly larger.
The path-dependent coefficients and proof are in \cref{app:finite-time-proof}.

\paragraph{Robustness to monotone distance transformations.}
This conclusion does not depend on using squared distance itself.  For any strictly increasing
$f:[0,\infty)\to\R$, define
$\cP^f_{N,s}(v_0):=f(\|v_0\|^2)-f(\|\Phi_{N,s}v_0\|^2)$.  Writing
$r=\|v_0\|^2$ gives the exact identity
\begin{equation}
  \cP^f_{N,s}(v_0)
  =f(r)-f\bigl(r-\cP_{N,s}(v_0)\bigr).
  \label{eq:transformed-progress-identity}
\end{equation}
The right-hand side is strictly increasing in $\cP_{N,s}$, so signs, comparisons across training
times, and maximizers are unchanged.  The conclusion need not hold for anisotropic distances.  An
explicit reversal is given in \cref{app:anisotropic-counterexample}.

\section{Numerical Experiments}
\label{sec:numerical}

Our experiments serve two complementary purposes.  The regression experiments verify the theory
within its assumptions, while Rotated MNIST and Yearbook deliberately depart from those assumptions
to test whether the predicted training-time trend persists under nonlinear models, stochastic
optimization, finite samples, cross-entropy losses, and natural distribution shift.  In every
experiment, the underlying path is held fixed while the number of visited tasks changes.

\paragraph{Rotating regression tasks.}
We first consider a two-dimensional rank-one problem whose task direction rotates smoothly by
$\pi/2$.  With $s_N=\tau/N$, the finite-task progress curves approach a common curve with an interior
optimum at $\tau^\star\approx2.20$, and $Ns_N^\star$ approaches the same value.  Training every task
to convergence instead gives progress of order $N^{-1}$.  The closed-form derivation and a
higher-dimensional experiment are in \cref{app:numerical-details}.

\begin{figure}[H]
  \centering
  \includegraphics[width=\scalingfigurewidth]{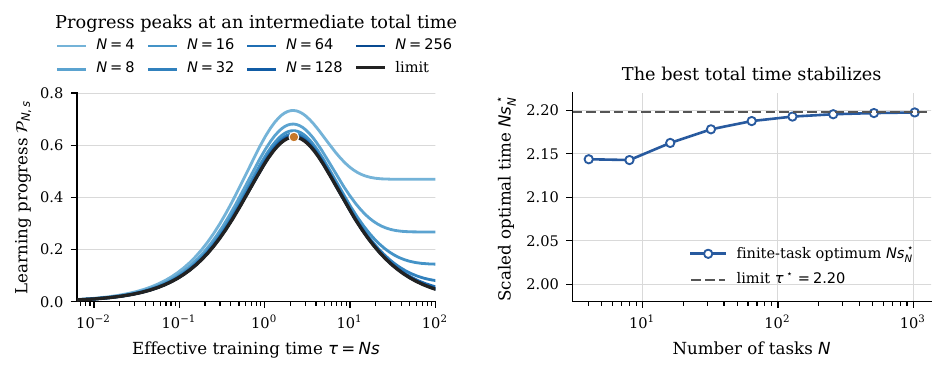}
  \caption{\textbf{Training time has an intermediate optimum on a rotating task path.}
  \textbf{Left:} Finite-task progress curves approach the same rise-and-decay limit.
  \textbf{Right:} The best total training time $Ns_N^\star$ approaches
  $\tau^\star\approx2.20$, so the best per-task time is approximately $2.20/N$.}
  \label{fig:scaling}
\end{figure}

\paragraph{Rotated MNIST.}
We next train an MLP on upright MNIST and update it along a fixed rotation path from $0^\circ$ to
$60^\circ$.  Every task count is evaluated on the same 13 angles.  Across ten seeds, the estimated
optimum follows $k^\star\propto N^{-0.84}$, with bootstrap 95\% interval $[0.76,1.03]$ for the
exponent.  At $N=32$, the final-angle loss continues to improve after the mean loss across the
rotation path begins to increase.  Full details are in \cref{app:rotated-mnist}.

\begin{figure}[H]
  \centering
  \includegraphics[width=\empiricalfigurewidth]{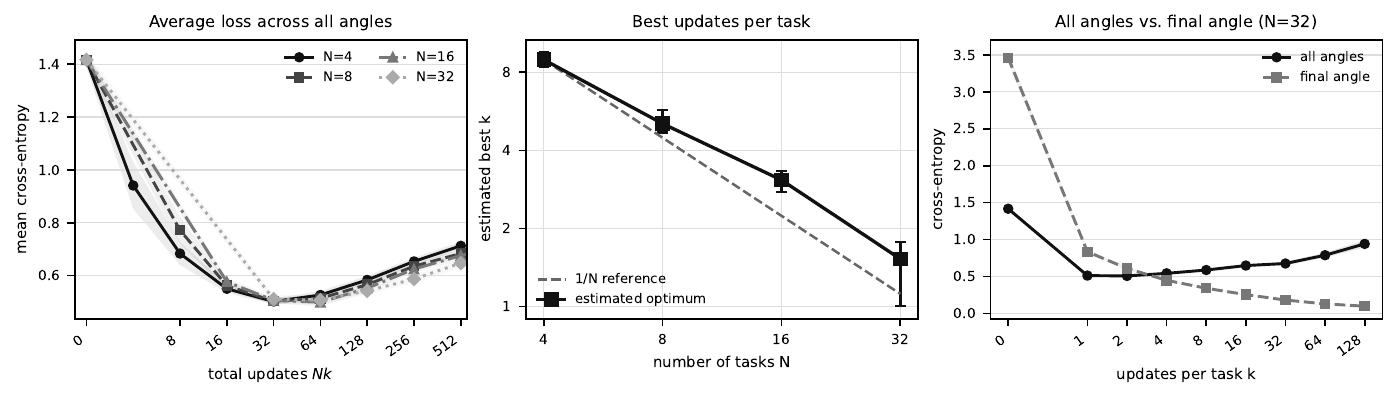}
  \caption{\textbf{Finer rotation steps favor fewer updates per task.}
  \textbf{Left:} Mean loss across the rotation path against total updates $Nk$.
  \textbf{Middle:} Estimated optimal updates per task and the $N^{-1}$ reference.
  \textbf{Right:} At $N=32$, final-angle loss keeps decreasing after the mean loss across all angles
  turns upward.  Error bars and bands are 95\% intervals over ten seeds.}
  \label{fig:rotated-mnist}
\end{figure}

\paragraph{Yearbook over time.}
Finally, we use Yearbook photographs spanning 1930--2013
\citep{ginosar2017yearbooks,yao2022wildtime}.  Fixed-width temporal smoothing defines the same path
for every $N$.  Across five seeds, the estimated optimum falls overall from $79.9$ updates per task
at $N=4$ to $5.0$ at $N=32$; the fitted exponent is $1.37$, with bootstrap 95\% interval
$[0.79,1.66]$.  At $N=32$, increasing $k$ from $4$ to $128$ lowers target-time cross-entropy from
$0.216$ to $0.051$ but raises the mean across the time path from $0.325$ to $0.535$.  Full details
and limitations are in \cref{app:yearbook}.

\begin{figure}[H]
  \centering
  \includegraphics[width=\empiricalfigurewidth]{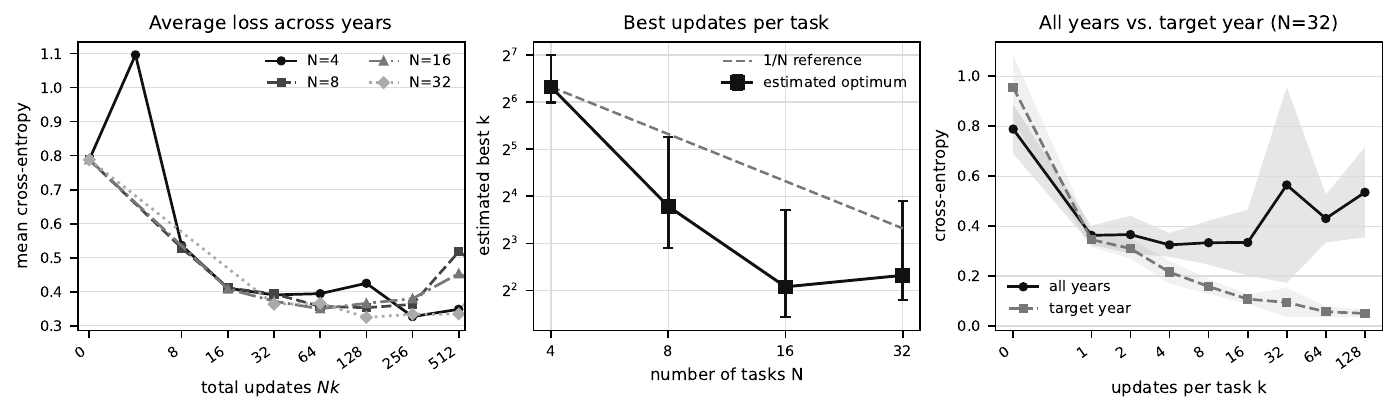}
  \caption{\textbf{Finer temporal steps favor less training per task overall.}
  \textbf{Left:} Mean loss across years against total updates $Nk$.
  \textbf{Middle:} Estimated optimal updates per task and the $N^{-1}$ reference.
  \textbf{Right:} At $N=32$, target-year loss keeps decreasing while the mean loss across years
  eventually increases.  Error bars and bands are 95\% intervals over five seeds.}
  \label{fig:yearbook}
\end{figure}

Across all three experiments, increasing $N$ refines a fixed path between unchanged endpoints.
Rotating regression recovers $s_N^\star\propto N^{-1}$.  For Rotated MNIST and Yearbook, the
fitted-exponent confidence intervals both contain the theoretical value $1$ and are consistent with
the predicted inverse scaling in nonlinear finite-sample and natural temporal settings.

\section{Discussion}

The central tension is that current-task fit and progress over the complete task path are different
objectives.  Longer local training improves current-task fit but changes the state inherited by later
tasks.  Consequently, fitting each task to convergence can reduce whole-path progress even when the
population losses are noiseless and share a solution.

This mechanism is distinct from conventional early stopping, which limits fitting of sample noise
to improve generalization \citep{yao2007early,raskutti2014early,ali2019early}.  It also echoes the
plasticity--forgetting intuition in continual learning
\citep{parisi2019continual,delange2021continual}: fitting the current task less tightly preserves
residual error on which later, slightly different tasks can act, a form of greater plasticity.  Our
results provide one theoretical account of this tension between local fitting and whole-path progress.

We analyze population gradient flow for overparameterized linear regression along a smooth,
constant-rank Hessian path with a shared solution and a uniform positive spectral gap.  These
assumptions make interactions among successive tasks explicit.  Strictly increasing transformations
of squared Euclidean distance preserve the ordering of training times.  The theory then predicts
that, when increasing $N$ refines a fixed continuous path, the optimal per-task time decreases as
$N^{-1}$.  Rotating regression recovers this structure, including a finite optimal total training
time and $s_N^\star\propto N^{-1}$.

Rotated MNIST extends this comparison to finite samples, stochastic optimization, cross-entropy, and
a nonlinear model; Yearbook replaces controlled rotations with natural temporal change.  In both
experiments, the fitted exponent's confidence interval contains the theoretical value $1$, and
continued training can improve the target task while worsening full-path performance.  Differences
from the theoretical exponent may reflect departures from the theorem's assumptions.  We therefore
interpret these experiments as reflecting the predicted qualitative trend rather than as exact
verifications of the $N^{-1}$ relation.

More broadly, gradual adaptation poses a time-scale matching problem: local optimization time must
track the rate at which the task distribution changes.  Refining the same path while holding
per-task training time fixed changes the learning regime, not just its resolution.  This perspective
is relevant to gradual self-training, test-time adaptation, and continual updating
\citep{kumar2020gradual,wang2022gradual,song2023dynamicworld,dorka2023dynamic,zhang2025dpcore},
where update budgets may need to depend on data frequency, task duration, and environmental change.
Extending the analysis to recurring tasks, abrupt switches, and model-dependent paths remains open.

\subsection*{AI use statement}

Generative AI assisted with proof checking, experiments, literature search, and manuscript editing.
The authors reviewed all AI-assisted content and take responsibility for the paper.

\subsection*{Reproducibility statement}

Complete proofs and experimental details are in \cref{app:technical,app:numerical-details}.  The
supplement includes scripts, fixed seeds, CSV files, and figure source for all reported results.

\bibliographystyle{iclr2027_conference}
\bibliography{references}

\clearpage

\appendix

\section{Proofs and Technical Details}
\label{app:technical}

\subsection{Pathwise loss and common-solution distance}
\label{app:pathwise-risk-equivalence}

\begin{proof}[Proof of \cref{prop:pathwise-risk-equivalence}]
Write $v=w-w^\star=v^\parallel+v^\perp$ with $v^\parallel\in\cK$ and
$v^\perp\in\cK^\perp$.  Since $H(t)v^\parallel=0$ for every $t$,
\[
  \cL_t(w)=\frac12(v^\perp)^\top H(t)v^\perp.
\]
The spectral upper bound in \cref{eq:spectral-gap} gives the right-hand inequality in
\cref{eq:pathwise-risk-equivalence}.  For the lower bound, first observe that
$\ker\bar H=\cK$.  Indeed, if $u^\top\bar H u=0$, then the continuous nonnegative function
$t\mapsto u^\top H(t)u$ vanishes everywhere.  Positive semidefiniteness then gives
$H(t)u=0$ for all $t$, so $u\in\cK$; the reverse inclusion is immediate.  Hence $\bar H$ is
positive definite on $\cK^\perp$ and $c_H>0$.  Finally,
\[
  \mathcal{R}_{\mathrm{path}}(w)
  \ge \int_0^1\cL_t(w)\,dt
  =\frac12(v^\perp)^\top\bar H v^\perp
  \ge c_H\|v^\perp\|^2.
\]
Because $\|v^\perp\|=\dist(w,w^\star+\cK)$, this proves
\cref{eq:pathwise-risk-equivalence}; the convergence equivalence follows from its two bounds.
\end{proof}

\subsection{The dense-task limit}

For a total training budget $\tau\ge0$, let $x_\tau:[0,1]\to\R^d$ solve
\begin{equation}
  \dot x_\tau(t)=-\tau H(t)x_\tau(t),
  \qquad x_\tau(0)=v_0,
  \label{eq:continuum-flow}
\end{equation}
and define its learning progress by
\begin{equation}
  \cP_\tau(v_0):=\|v_0\|^2-\|x_\tau(1)\|^2.
  \label{eq:continuum-progress}
\end{equation}

The following lemma proves \cref{thm:continuum-main}, including its uniform $O(N^{-1})$ rates.

\begin{lemma}[Dense-task limit]
\label{thm:continuum}
Under the assumptions of \cref{sec:setup}, if $Ns_N\to\tau\in[0,\infty)$, then
\begin{equation}
  \Phi_{N,s_N}v_0\longrightarrow x_\tau(1),
  \qquad
  \cP_{N,s_N}(v_0)\longrightarrow\cP_\tau(v_0).
  \label{eq:continuum-limit}
\end{equation}
For every $T<\infty$,
\begin{align}
  \sup_{0\le\tau\le T}
  \left\|\Phi_{N,\tau/N}v_0-x_\tau(1)\right\|
  &=O_T(N^{-1}),
  \label{eq:appendix-continuum-state-rate}\\
  \sup_{0\le\tau\le T}
  \left|\cP_{N,\tau/N}(v_0)-\cP_\tau(v_0)\right|
  &=O_T(N^{-1}).
  \label{eq:appendix-continuum-progress-rate}
\end{align}
Moreover,
\begin{equation}
  \cP_\tau(v_0)
  =2\tau\int_0^1x_\tau(t)^\top H(t)x_\tau(t)\,dt.
  \label{eq:continuum-energy}
\end{equation}
\end{lemma}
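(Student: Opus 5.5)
We compare the discrete product with the continuum flow by a standard one-step consistency plus stability argument. Let $U_\tau(t,r)$ denote the propagator of \cref{eq:continuum-flow} from time $r$ to time $t$. Since $\frac{d}{dt}\|U_\tau(t,r)z\|^2=-2\tau\langle U z,H(t)Uz\rangle\le0$, every $U_\tau(t,r)$ is a contraction. Each factor $e^{-sH(t_j)}$ is also a contraction, because $H(t_j)\succeq0$. Set $x_j:=x_\tau(t_j)$ and $b_N:=Ns_N$, and write the telescoping identity
\[
  \Phi_{N,s_N}v_0-x_\tau(1)
  =\sum_{j=1}^N e^{-s_NH(t_N)}\cdots e^{-s_NH(t_{j+1})}
  \bigl(e^{-s_NH(t_j)}-U_\tau(t_j,t_{j-1})\bigr)x_{j-1}.
\]
By contractivity, the endpoint error is bounded by $\sum_j\|(e^{-s_NH(t_j)}-U_\tau(t_j,t_{j-1}))x_{j-1}\|$. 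Moreover $\|x_{j-1}\|\le\|v_0\|$.

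The local error is estimated next. Put $h=1/N$. Taylor expansion gives
\[
  e^{-s_NH(t_j)}=I-b_NhH(t_j)+R_1,\qquad \|R_1\|\le\tfrac12(b_NhL)^2.
\]
For the flow, $U_\tau(t_j,t_{j-1})=I-\tau\int_{t_{j-1}}^{t_j}H(r)\,dr+R_2$ with $\|R_2\|\le C(\tau hL)^2$, by Picard iteration. Since $H\in C^1$ on the compact interval $[0,1]$, we have $\|\int_{t_{j-1}}^{t_j}H-hH(t_j)\|\le h^2\sup\|\dot H\|$. Combining these, the one-step error is
\[
  O\bigl(h^2(1+b_N^2+\tau+\tau^2)\bigr)+|b_N-\tau|\,hL .
\]
Summing over $N$ steps gives
\[
  \|\Phi_{N,s_N}v_0-x_\tau(1)\|\le C_T\,N^{-1}\|v_0\|+L|b_N-\tau|\,\|v_0\|,
\]
where $C_T$ is uniform over $\tau,b_N\le T$. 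This error tends to $0$ when $b_N\to\tau$. When $b_N=\tau$, the second term vanishes, and the first term is uniform over $\tau\in[0,T]$; this proves \cref{eq:appendix-continuum-state-rate}.

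For the progress statements, both $\Phi_{N,s_N}v_0$ and $x_\tau(1)$ lie in the ball of radius $\|v_0\|$. Therefore
\[
  \bigl|\|\Phi v_0\|^2-\|x_\tau(1)\|^2\bigr|\le2\|v_0\|\,\|\Phi v_0-x_\tau(1)\|,
\]
which yields both the convergence in \cref{eq:continuum-limit} and the rate \cref{eq:appendix-continuum-progress-rate}. For the energy identity \cref{eq:continuum-energy}, differentiate $\|x_\tau(t)\|^2$ along \cref{eq:continuum-flow} to get $-2\tau x_\tau^\top H x_\tau$, then integrate over $[0,1]$.

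The main obstacle is the uniformity of the local consistency bound in $\tau\in[0,T]$. This requires checking that the Picard remainder and the quadrature error of $\int H$ have constants depending only on $T$, $L$ and $\sup\|\dot H\|$. This is where smoothness of $H$ is used; $C^1$ suffices here.
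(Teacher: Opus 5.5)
Your proof is correct and is essentially the paper's argument: an $O(h^2)$ one-step consistency bound uniform for coefficients in $[0,T]$, contractivity of the task updates so that local errors are not amplified when summed to $O(h)$, the $2\|v_0\|$ Lipschitz bound for progress, and differentiation of $\|x_\tau(t)\|^2$ for the energy identity. The only difference is cosmetic. You absorb the coefficient mismatch $|b_N-\tau|\,hL$ directly into the one-step error, as in the main-text sketch. The appendix proof instead compares with $x_{b_N}$ and then invokes the separate estimate $\|x_b(1)-x_c(1)\|\le L\|v_0\|\,|b-c|$.
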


\begin{proof}
Set $h=N^{-1}$ and $b_N=Ns_N$, and let $x_b$ denote the solution of
\cref{eq:continuum-flow} with coefficient $b$.  Fix $T<\infty$.  Taylor expansion of the matrix
exponential and the integral form of the ODE give, uniformly for $b\in[0,T]$ and $j=1,\ldots,N$,
\[
  \left\|e^{-hbH(t_j)}x_b(t_{j-1})-x_b(t_j)\right\|\le C_T h^2.
\]
Indeed, both terms equal
$x_b(t_{j-1})-hbH(t_j)x_b(t_{j-1})$ up to $O_T(h^2)$; this uses the $C^1$ regularity of $H$.
If $e_j=v_j-x_b(t_j)$, the exact task update is a contraction, so
\[
  \|e_j\|\le\|e_{j-1}\|+C_T h^2.
\]
Since $e_0=0$, summing over the $N$ steps gives
$\|\Phi_{N,b/N}v_0-x_b(1)\|\le C_T h$.  This proves
\cref{eq:appendix-continuum-state-rate}.  Both the discrete updates and the ODE flow are
contractions, and hence
\[
  \left|\cP_{N,b/N}(v_0)-\cP_b(v_0)\right|
  \le 2\|v_0\|\,\|\Phi_{N,b/N}v_0-x_b(1)\|,
\]
which proves \cref{eq:appendix-continuum-progress-rate}.  To compare two coefficients $b,c\ge0$,
subtract their ODEs and apply variation of constants.  Since the homogeneous flow is a contraction,
\[
  \|x_b(1)-x_c(1)\|
  \le |b-c|\int_0^1\|H(t)\|\,\|x_c(t)\|\,dt
  \le L\|v_0\|\,|b-c|.
\]
Taking $b=b_N$ and $c=\tau$ proves \cref{eq:continuum-limit} whenever $b_N\to\tau$.

Finally,
\[
  \frac{d}{dt}\|x_\tau(t)\|^2
  =-2\tau x_\tau(t)^\top H(t)x_\tau(t).
\]
Integration from $0$ to $1$ gives \cref{eq:continuum-energy}.
\end{proof}

\subsection{Exact expansions of the limiting curve}
\label{app:tail-details}

The small-budget behavior is described in the ambient parameter coordinates.  Define the operator
on $V(0)$
\begin{equation}
  \cA_H
  :=P(0)\left(\int_0^1H(t)\,dt\right)P(0)\bigg|_{V(0)}.
  \label{eq:AH}
\end{equation}
For large budgets, the solution directions move with $t$, so their contributions are compared in a
common coordinate system.  Let $U(t)$ be Kato transport,
\begin{equation}
  \dot U(t)=[\dot P(t),P(t)]U(t),
  \qquad U(0)=I.
  \label{eq:kato}
\end{equation}
The generator is skew-symmetric, $U(t)$ is orthogonal, and
$P(t)U(t)=U(t)P(0)$ \citep{kato1950adiabatic,avron2012contracting}.  Write
$P_0=P(0)$ and $Q_0=I-P_0$.  In this fixed frame,
\begin{equation}
  U(t)^\top H(t)U(t)
  =\begin{pmatrix}0&0\\0&H_\perp(t)\end{pmatrix},
  \qquad
  U(t)^\top\dot U(t)
  =\begin{pmatrix}0&-B(t)^\top\\B(t)&0\end{pmatrix},
  \label{eq:block-frame}
\end{equation}
relative to $V(0)\oplus V(0)^\perp$.  Here $H_\perp(t)\succeq\mu I$ and
$B(t)=Q_0U(t)^\top\dot P(t)U(t)P_0$.  Define
\begin{align}
  \cG_H
  &:=\int_0^1B(t)^\top H_\perp(t)^{-1}B(t)\,dt
  \label{eq:GH-block}\\
  &=\int_0^1
    U(t)^\top\dot P(t)H(t)^\dagger\dot P(t)U(t)\,dt\bigg|_{V(0)}.
  \label{eq:GH-ambient}
\end{align}
\begin{lemma}[Large-budget tracking expansion]
\label{lem:continuous-tracking}
Let $y_\tau(t)=U(t)^\top x_\tau(t)=(p_\tau(t),q_\tau(t))$ in the decomposition
$V(0)\oplus V(0)^\perp$.  As $\tau\to\infty$,
\begin{align}
  \sup_{t\in[0,1]}\|p_\tau(t)-v_0\|&=O(\tau^{-1}),
  \label{eq:p-tracking}\\
  q_\tau(t)&=-\tau^{-1}H_\perp(t)^{-1}B(t)v_0+r_\tau(t),
  \label{eq:q-tracking}\\
  \int_0^1\|r_\tau(t)\|^2dt&=o(\tau^{-2}).
  \label{eq:r-tracking}
\end{align}
\end{lemma}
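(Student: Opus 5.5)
We work in the transported frame and reduce \cref{lem:continuous-tracking} to a singularly perturbed linear system in which the fast block is uniformly damped.

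\textbf{Step 1: transformed dynamics.} Since $U$ is orthogonal, $\dot U^\top U=-(U^\top\dot U)$. Differentiating $y_\tau=U^\top x_\tau$ and using \cref{eq:main-continuum-flow} and \cref{eq:block-frame} gives
\[
  \dot p_\tau=B^\top q_\tau,
  \qquad
  \dot q_\tau=-Bp_\tau-\tau H_\perp q_\tau,
  \qquad
  p_\tau(0)=v_0,\quad q_\tau(0)=0.
\]
The initial condition $q_\tau(0)=0$ holds because $U(0)=I$ and $v_0\in V(0)$.

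Regularity follows from the standing assumptions. Since $H$ is $C^3$ with constant rank, $P$ is $C^3$, so $U$ is $C^3$. Hence $B$ and $H_\perp$ are at least $C^1$. The eigenvalue bounds give $H_\perp\succeq\mu I$ and $\|H_\perp^{-1}\|\le\mu^{-1}$, so $H_\perp^{-1}B$ is $C^1$ with bounded derivative. Finally, $\|y_\tau(t)\|=\|x_\tau(t)\|\le\|v_0\|$ because the flow is a contraction.

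\textbf{Step 2: a priori $O(\tau^{-1})$ bounds.} Let $S_\tau(t,r)$ be the propagator of $\dot q=-\tau H_\perp(t)q$. Differentiating $\|q\|^2$ shows $\|S_\tau(t,r)\|\le e^{-\tau\mu(t-r)}$. Variation of constants gives
\[
  q_\tau(t)=-\int_0^tS_\tau(t,r)B(r)p_\tau(r)\,dr,
  \qquad
  \|q_\tau(t)\|\le \frac{\sup\|B\|\,\|v_0\|}{\tau\mu}.
\]
This bound is uniform in $t$. Integrating $\dot p_\tau=B^\top q_\tau$ then gives $\sup_t\|p_\tau(t)-v_0\|\le\sup\|B\|^2\|v_0\|/(\tau\mu)$, which is \cref{eq:p-tracking}.

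Using the energy identity $\int_0^1 q^\top H_\perp q\le\|v_0\|^2/(2\tau)$ alone would give only $O(\tau^{-1/2})$. The variation-of-constants estimate is what makes the bound sharp.

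\textbf{Step 3: the quasi-static correction.} Set $z_\tau(t)=\tau^{-1}H_\perp(t)^{-1}B(t)p_\tau(t)$ and $w_\tau=q_\tau+z_\tau$. By construction $\tau H_\perp z_\tau=Bp_\tau$, so
\[
  \dot w_\tau=-\tau H_\perp w_\tau+\dot z_\tau .
\]
By Step 1 and Step 2, $\dot z_\tau=\tau^{-1}\bigl[(H_\perp^{-1}B)^{\cdot}p_\tau+H_\perp^{-1}BB^\top q_\tau\bigr]=O(\tau^{-1})$ uniformly in $t$. Variation of constants with $w_\tau(0)=z_\tau(0)$ then yields
\[
  \|w_\tau(t)\|\le e^{-\tau\mu t}\,\frac{C}{\tau}+\frac{C}{\tau^2}.
\]

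\textbf{Step 4: conclusion.} Define $r_\tau:=q_\tau+\tau^{-1}H_\perp^{-1}Bv_0=w_\tau-\tau^{-1}H_\perp^{-1}B(p_\tau-v_0)$. The second term is $O(\tau^{-2})$ uniformly by \cref{eq:p-tracking}. Squaring and integrating,
\[
  \int_0^1\|r_\tau\|^2\,dt\le \frac{C}{\tau^2}\int_0^1e^{-2\tau\mu t}\,dt+\frac{C}{\tau^4}=O(\tau^{-3})=o(\tau^{-2}).
\]
This proves \cref{eq:q-tracking} and \cref{eq:r-tracking}.

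\textbf{Main obstacle.} The delicate point is the initial boundary layer. In general $B(0)v_0\neq0$ while $q_\tau(0)=0$, so $r_\tau$ is not uniformly $o(\tau^{-1})$ near $t=0$. This is why the statement controls $r_\tau$ only in $L^2$. The argument handles it through the explicit $e^{-\tau\mu t}$ factor, whose $L^2$ mass is $O(\tau^{-1})$.

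A second point to check is that the constants stay uniform in $\tau$. This requires the $C^1$ bounds on $H_\perp^{-1}B$, which rely on constant rank and the lower spectral bound $\mu$.
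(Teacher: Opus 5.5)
Your proposal is correct and follows the paper's proof essentially step for step. It uses the same transported block system, the same variation-of-constants estimate with the $e^{-\mu\tau(t-r)}$ propagator bound, the same quasi-static corrector $\tau^{-1}H_\perp^{-1}Bp_\tau$ handled by a second variation of constants, and the same final replacement of $p_\tau$ by $v_0$. The only difference is minor: you bound $p_\tau$ directly through the contraction $\|y_\tau(t)\|=\|x_\tau(t)\|\le\|v_0\|$, whereas the paper uses the bootstrap inequality $\|p_\tau\|_\infty\le\|v_0\|+C\tau^{-1}\|p_\tau\|_\infty$ for large $\tau$, so your version is slightly cleaner and gives explicit constants.
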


\begin{proof}
Using \cref{eq:continuum-flow,eq:block-frame}, the transported components satisfy
\begin{equation}
  \dot p_\tau=B(t)^\top q_\tau,
  \qquad
  \dot q_\tau=-B(t)p_\tau-\tau H_\perp(t)q_\tau,
  \qquad (p_\tau(0),q_\tau(0))=(v_0,0).
  \label{eq:block-ode}
\end{equation}
Let $E_\tau(t,r)$ denote the evolution operator generated by
$-\tau H_\perp(t)$.  The spectral gap gives
$\|E_\tau(t,r)\|\le e^{-\mu\tau(t-r)}$.  Variation of constants in the second equation of
\cref{eq:block-ode} gives
\[
  q_\tau(t)=-\int_0^tE_\tau(t,r)B(r)p_\tau(r)\,dr,
  \qquad
  \|q_\tau\|_\infty\le \frac{C}{\tau}\|p_\tau\|_\infty.
\]
The first equation of \cref{eq:block-ode} therefore implies
\[
  \|p_\tau\|_\infty
  \le \|v_0\|+\frac{C}{\tau}\|p_\tau\|_\infty.
\]
For all sufficiently large $\tau$, this bounds $p_\tau$ uniformly.  Returning to the two estimates
above gives $\|q_\tau\|_\infty=O(\tau^{-1})$ and then
$\|p_\tau-v_0\|_\infty=O(\tau^{-1})$, proving \cref{eq:p-tracking}.

Set $a_\tau(t)=H_\perp(t)^{-1}B(t)p_\tau(t)$ and
$\widetilde r_\tau=q_\tau+\tau^{-1}a_\tau$.  \Cref{eq:block-ode} gives
\[
  \dot{\widetilde r}_\tau
  +\tau H_\perp(t)\widetilde r_\tau
  =\tau^{-1}\dot a_\tau(t),
  \qquad
  \widetilde r_\tau(0)=\tau^{-1}a_\tau(0).
\]
Smoothness, \cref{eq:p-tracking}, and the first equation of \cref{eq:block-ode} bound
$\dot a_\tau$ uniformly.  A second use of variation of constants gives
\[
  \widetilde r_\tau(t)
  =E_\tau(t,0)\frac{a_\tau(0)}{\tau}
   +\frac1\tau\int_0^tE_\tau(t,r)\dot a_\tau(r)\,dr.
\]
The first term has squared $L^2$ norm $O(\tau^{-3})$.  The second is
$O(\tau^{-2})$ uniformly in $t$ and hence has squared $L^2$ norm $O(\tau^{-4})$.
Finally, \cref{eq:p-tracking} gives
\[
  \frac1\tau H_\perp(t)^{-1}B(t)(p_\tau(t)-v_0)=O(\tau^{-2})
\]
uniformly.  Taking
$r_\tau=\widetilde r_\tau-\tau^{-1}H_\perp^{-1}B(p_\tau-v_0)$ proves
\cref{eq:q-tracking,eq:r-tracking}.
\end{proof}

We next give an estimate that is uniform in both task spacing and per-task training time.  Its
fixed-time expansions prove \cref{prop:long-training}, and its first bound is used to prove the
large-$Ns_N$ regime of \cref{thm:optimal-main}.

\subsubsection{Dense-task expansion for long per-task training}
\label{app:finite-time-proof}

For $s\in(0,\infty)$, define on the normal block
\begin{equation}
  \cW_s(t):=\frac12\coth\!\left(\frac{sH_\perp(t)}2\right),
  \qquad
  \cW_\infty(t):=\frac12I,
  \label{eq:appendix-Ws}
\end{equation}
and set
\begin{equation}
  \cG_s:=\int_0^1B(t)^\top\cW_s(t)B(t)\,dt.
  \label{eq:appendix-Gs}
\end{equation}
Together with $\cG_H$ from \cref{eq:GH-block}, these are the operators used in the coefficients of
\cref{prop:long-training}.

\begin{lemma}[Finite-time expansion]
\label{lem:discrete-tracking}
Let
\[
  \delta_N:=1-e^{-\mu s_N}.
\]
If $N\delta_N\to\infty$, then
\begin{equation}
  \cP_{N,s_N}(v_0)=O\!\left(\frac{1}{N\delta_N}\right).
  \label{eq:uniform-discrete-bound}
\end{equation}
For every fixed $s\in(0,\infty]$,
\begin{align}
  P(1)\Phi_{N,s}\big|_{V(0)}
  &=U(1)\left(I-\frac{\cG_s}{N}\right)+o(N^{-1}),
  \label{eq:appendix-fixed-s-operator}\\
  \cP_{N,s}(v_0)
  &=\frac{2}{N}\langle v_0,\cG_sv_0\rangle+o(N^{-1}).
  \label{eq:appendix-fixed-s-progress}
\end{align}
If additionally $s_N\to0$, then
\begin{equation}
  \cP_{N,s_N}(v_0)
  =\frac{2}{Ns_N}\langle v_0,\cG_Hv_0\rangle
   +o((Ns_N)^{-1}).
  \label{eq:discrete-large-tail}
\end{equation}
Moreover, for every fixed finite $s$ and every nonzero
$v_0\in V(0)\cap\cK^\perp$,
\begin{equation}
  \langle v_0,(\cG_s-\cG_\infty)v_0\rangle>0.
  \label{eq:strict-fixed-vs-exact}
\end{equation}
\end{lemma}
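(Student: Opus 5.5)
The plan is to work in the Kato frame of \cref{eq:kato} and follow the discrete iterates block by block, the same way \cref{lem:continuous-tracking} handles the ODE. Put $h=N^{-1}$, $\rho_j:=e^{-s_NH_\perp(t_j)}$, and $y_j:=U(t_j)^\top v_j=(p_j,q_j)\in V(0)\oplus V(0)^\perp$. Since $v_0\in V(0)$, we have $q_0=0$.

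\textbf{Step 1: the one-step recursion.} Because $U$ is orthogonal and $C^2$, \cref{eq:block-frame} gives
\[
  U(t_j)^\top U(t_{j-1})=I-h\begin{pmatrix}0&-B(t_j)^\top\\B(t_j)&0\end{pmatrix}+O(h^2).
\]
I would fix this sign by matching \cref{eq:block-ode}. Also $U(t_j)^\top e^{-s_NH(t_j)}U(t_j)=\mathrm{diag}(I,\rho_j)$. Together these give
\[
  w_j:=q_{j-1}+hB(t_j)p_{j-1}+O(h^2\|y_{j-1}\|),\qquad q_j=\rho_jw_j,\qquad p_j=p_{j-1}-hB(t_j)^\top q_{j-1}+O(h^2\|y_{j-1}\|).
\]
By \cref{prop:exact-progress-decomposition}, the progress on task $j$ is exactly $\langle w_j,(I-\rho_j^2)w_j\rangle$, since only the normal block of the incoming state is reduced. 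Every map is a contraction, so $\|y_j\|\le\|v_0\|$.

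\textbf{Step 2: the uniform bound \cref{eq:uniform-discrete-bound}.} We have $\|\rho_j\|\le1-\delta_N$, so $\|q_j\|\le(1-\delta_N)(\|q_{j-1}\|+Ch)$. With $q_0=0$ this gives $\|q_j\|\le Ch/\delta_N$ and hence $\|w_j\|\le Ch/\delta_N$ for all $j$. For the per-step factor, $\|I-\rho_j^2\|\le1-e^{-2Ls_N}\le(2L/\mu)\delta_N$, because $1-e^{-ax}$ is concave in $x$ and so $(1-e^{-2Lx})/(1-e^{-\mu x})$ is bounded. Each task therefore contributes $O(h^2/\delta_N)$, and summing over $N$ tasks gives $O(1/(N\delta_N))$.

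\textbf{Step 3: the quasi-static expansions.} The $q$-recursion $q_j=\rho_j(q_{j-1}+hB_jp_{j-1})$ has a slowly varying fixed point. Since $p$ moves only $O(h^2/\delta_N)$ per step and $t\mapsto\rho(t)$ is smooth, I would show
\[
  w_j=h(I-\rho_j)^{-1}B_jp_{j-1}+\text{a lower-order correction},
\]
with the correction controlled by the contraction $1-\delta_N$ applied to discrete derivatives of this fixed point. There is also an initial transient from $q_0=0$, which decays geometrically and contributes only $O(h^2/\delta_N)$ in total. Substituting into the per-step progress and using $(I-\rho)^{-1}(I-\rho^2)(I-\rho)^{-1}=(I+\rho)(I-\rho)^{-1}=\coth(s_NH_\perp/2)=2\cW_{s_N}$ gives
\[
  \cP_{N,s_N}(v_0)=2h^2\sum_j\langle p_{j-1},B_j^\top\cW_{s_N}(t_j)B_jp_{j-1}\rangle+\text{error}.
\]
Since $p_j=v_0+o(1)$ uniformly, the Riemann sum yields \cref{eq:appendix-fixed-s-progress}; for $s=\infty$ we have $\rho\equiv0$, which gives $\cW_\infty=\frac12I$. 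For \cref{eq:discrete-large-tail}, I would use $\frac12\coth(s\lambda/2)=(s\lambda)^{-1}+O(s)$ uniformly on $[\mu,L]$, so that $\cG_{s_N}=s_N^{-1}\cG_H+O(s_N)$. For the operator statement \cref{eq:appendix-fixed-s-operator}, the $p$-drift $-hB_j^\top q_{j-1}\approx-h^2B_j^\top\rho(I-\rho)^{-1}B_jp$ sums to the $O(h)$ correction. I would identify the coefficient by checking that it equals $\cG_s$, using $\rho(I-\rho)^{-1}+\frac12I=\cW_s$, where the extra $\frac12 B^\top B$ comes from the second-order term of the frame rotation. Then $P(1)U(1)=U(1)P_0$ gives the statement.

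\textbf{Step 4: strictness \cref{eq:strict-fixed-vs-exact}.} We have $\cW_s-\cW_\infty=\rho(I-\rho)^{-1}\succ0$ on $V(0)^\perp$ for finite $s$. Hence $\langle v_0,(\cG_s-\cG_\infty)v_0\rangle=\int_0^1\|(\rho(I-\rho)^{-1})^{1/2}B(t)v_0\|^2dt\ge0$, and equality forces $B(t)v_0=0$ for all $t$. In that case $\frac{d}{dt}U(t)v_0=U(t)U(t)^\top\dot U(t)v_0=U(t)(0,B(t)v_0)=0$, so $U(t)v_0=v_0$. Then $v_0=U(t)P_0v_0=P(t)U(t)v_0\in V(t)$ for every $t$, so $v_0\in\cK\cap\cK^\perp=\{0\}$, a contradiction.

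The main obstacle is Step 3 made uniform in $s_N$ as $\delta_N\to0$. The quasi-static error must be shown to be relatively $o(1)$, specifically $O(1/(N\delta_N))$, which requires bounding discrete derivatives of $(I-\rho_j)^{-1}B_jp_{j-1}$ by $O(h/\delta_N)$ per step. It also requires that the $O(h^2)$ frame remainders, when amplified by $(1-\rho)^{-1}\sim\delta_N^{-1}$, stay below the main term. I would first try a discrete variation-of-constants argument mirroring the $\widetilde r_\tau$ estimate in \cref{lem:continuous-tracking}.
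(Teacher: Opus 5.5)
Your proposal is correct. Its skeleton matches the paper's:
\begin{itemize}
\item the same Kato-frame block recursion;
\item the same quasi-static normal component $q_{j-1}\approx h\rho_j(I-\rho_j)^{-1}B_jp_{j-1}$, which is the paper's $hL_jp_{j-1}$ up to your sign convention;
\item the deviation from it controlled by a one-step contraction with forcing $O(h^2/\delta_N)$, summed by a geometric series. This is exactly the paper's $e_j$ recursion, i.e.\ the discrete variation of constants you propose to use for your main obstacle;
\item essentially the same Kato-equation argument for \cref{eq:strict-fixed-vs-exact}.
\end{itemize}

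The genuine difference is how you extract the progress. The paper expands the tangent coordinate to $p_N=(I-h\sum_j hB_j^\top W_jB_j)v_0+o(h)$ and reads off $\cP=\|v_0\|^2-\|p_N\|^2-\|q_N\|^2$. That route needs the second-order tangent block $-\tfrac{h^2}{2}B_j^\top B_j$ of $U_j^\top U_{j-1}$, which is where the $\tfrac12 I$ in $\cW_s$ comes from. It also obtains even the crude bound \cref{eq:uniform-discrete-bound} through an estimate on $\|p_N-v_0\|$.

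You instead sum the exact per-task dissipation $\langle w_j,(I-\rho_j^2)w_j\rangle$ from \cref{prop:exact-progress-decomposition}, a discrete analogue of \cref{eq:continuum-energy}. This buys two things:
\begin{itemize}
\item \cref{eq:uniform-discrete-bound} becomes a two-line estimate that holds for every $N$ and $s_N$, not only when $N\delta_N\to\infty$.
\item The full weight $2\cW_s=\coth(sH_\perp/2)=(I-\rho)^{-1}(I-\rho^2)(I-\rho)^{-1}$ comes out of the leading-order $w_j$ alone, with no second-order frame term.
\end{itemize}
The price is that the operator identity \cref{eq:appendix-fixed-s-operator}, which the anisotropic counterexample needs, does not follow from the dissipation sum. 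So you still have to do the paper's $p_N$ computation with the $-\tfrac{h^2}{2}B^\top B$ term, as you indicate. The paper gets both statements from that single computation.

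Two small bookkeeping points.
\begin{itemize}
\item In Step 1 your block formulas reverse the sign of $B$ relative to the matrix you display. This is harmless, since every coefficient is quadratic in $B$.
\item The initial transient from $q_0=0$ contributes $O(h^2/\delta_N^2)$ in total, not $O(h^2/\delta_N)$. A deviation of size $h/\delta_N$ persists for about $\delta_N^{-1}$ steps, and each step's cross term with the main part is of order $h^2/\delta_N$. This is still $o(h/\delta_N)$ because $N\delta_N\to\infty$, consistent with the relative error $O(1/(N\delta_N))$ you say you need.
\end{itemize}
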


\begin{proof}
Let $h=N^{-1}$, $U_j=U(t_j)$, and
$z_j=U_j^\top v_j=(p_j,q_j)$.  Put $B_j=B(t_{j-1})$.  Taylor expansion of
$U_j^\top U_{j-1}$ in the block frame \cref{eq:block-frame} gives
\begin{align}
  p_j&=p_{j-1}+hB_j^\top q_{j-1}
       -\frac{h^2}{2}B_j^\top B_jp_{j-1}
       +a_j,
  \label{eq:discrete-p}\\
  q_j&=R_j(q_{j-1}-hB_jp_{j-1})
       +b_j,
  \label{eq:discrete-q}
\end{align}
where $R_j=e^{-s_NH_\perp(t_j)}$ satisfies
$\|R_j\|\le1-\delta_N$, while
\[
  \|a_j\|\le C(h^3+h^2\|q_{j-1}\|),
  \qquad
  \|b_j\|\le Ch^2.
\]
These bounds are uniform in both $j$ and $s_N$.  Indeed, the block generator
$S(t):=U(t)^\top\dot U(t)$ in \cref{eq:block-frame} has vanishing diagonal blocks, while
$S(t)^2$ has tangent block $-B(t)^\top B(t)$.  A second-order Taylor expansion of
$U_j^\top U_{j-1}$ therefore gives tangent--normal blocks $hB_j^\top+O(h^2)$ and
$-hB_j+O(h^2)$, and tangent block
$I-\frac{h^2}{2}B_j^\top B_j+O(h^3)$.  The $C^3$ bounds on $U$ make these remainders uniform;
multiplication of the normal block by $R_j$, with $\|R_j\|\le1$, preserves the stated estimates.
The original updates are contractions, so $p_j$ and $q_j$ remain bounded.  Iterating
\cref{eq:discrete-q} and using $h/\delta_N=(N\delta_N)^{-1}\to0$ yields
\[
  \max_{j\le N}\|q_j\|=O\!\left(\frac{h}{\delta_N}\right).
\]
Substitution into \cref{eq:discrete-p}, followed by discrete Gronwall, gives
\[
  \max_{j\le N}\|p_j-v_0\|
  =O\!\left(\frac{h}{\delta_N}\right).
\]
Because $U_N$ is orthogonal,
\[
  \cP_{N,s_N}(v_0)
  =\|v_0\|^2-\|p_N\|^2-\|q_N\|^2.
\]
The last two bounds, together with $h/\delta_N\to0$, prove
\cref{eq:uniform-discrete-bound}.

Now fix $s\in(0,\infty)$, so that $R_j=e^{-sH_\perp(t_j)}$ is uniformly contractive.  Define
\begin{equation}
  L_j:=-(I-R_j)^{-1}R_jB_j,
  \qquad
  W_j:=\frac12(I+R_j)(I-R_j)^{-1}.
  \label{eq:fixed-s-LW}
\end{equation}
Smoothness gives $\|L_j\|\le C$ and $\|L_j-L_{j-1}\|\le Ch$.  For
$0\le j\le N-1$, set $e_j=q_j-hL_{j+1}p_j$.  The identity
$(I-R_j)L_j=-R_jB_j$ and
\cref{eq:discrete-p,eq:discrete-q} imply
\[
  \|e_j\|\le(1-\delta)\|e_{j-1}\|+Ch^2,
  \qquad \delta:=1-e^{-\mu s}>0.
\]
Since $e_0=O(h)$, the geometric-series bound gives
$\|e_j\|\le Ch(1-\delta)^j+Ch^2$ and hence
$h\sum_{j=1}^N\|e_{j-1}\|=O(h^2)=o(h)$.  Also
$q_j=hL_jp_{j-1}+e_{j-1}$.  Substitution into \cref{eq:discrete-p} gives
\begin{equation}
  p_N
  =\left[I-h\sum_{j=1}^NhB_j^\top W_jB_j\right]v_0+o(h),
  \qquad q_N=O(h).
  \label{eq:fixed-s-slow-update}
\end{equation}
On the normal block,
\[
  W_j=\frac12\coth\!\left(\frac{sH_\perp(t_j)}2\right).
\]
The Riemann sum in \cref{eq:fixed-s-slow-update} therefore converges to $\cG_s$ from
\cref{eq:appendix-Gs}.  Since $P(1)v_N=U(1)p_N$ and $\|q_N\|^2=o(h)$, expansion of the endpoint norm
proves \cref{eq:appendix-fixed-s-operator,eq:appendix-fixed-s-progress}.  The case $s=\infty$
follows by setting $R_j=0$, equivalently $W_j=I/2$.

For every positive eigenvalue $\lambda$ of $H_\perp(t)$,
\[
  \frac12\coth(s\lambda/2)-\frac12
  =\frac{e^{-s\lambda}}{1-e^{-s\lambda}}>0.
\]
Hence $\cG_s-\cG_\infty$ is positive semidefinite.  Equality on $v_0$ would require
$B(t)v_0=0$ for every $t$.  To see why this forces $v_0\in\cK$, set
$z(t)=U(t)v_0\in V(t)$.  The vector $U(t)^\top\dot P(t)z(t)$ lies in $V(0)^\perp$ and equals
$B(t)v_0$ there.  The Kato equation therefore gives
$\dot z(t)=\dot P(t)z(t)=0$.  Hence $z(t)=v_0\in V(t)$ for every $t$, so
$v_0\in\cK$.  This contradicts nonzero $v_0\in\cK^\perp$ and proves
\cref{eq:strict-fixed-vs-exact}.

For the refined expansion, suppose $s_N\to0$.  Then $\delta_N\asymp s_N$.  Define
\[
  L_j:=-(I-R_j)^{-1}R_jB_j,
  \qquad
  W_j:=\frac12(I+R_j)(I-R_j)^{-1}.
\]
The spectral bounds and smoothness give
\[
  \|L_j\|\le Cs_N^{-1},
  \qquad
  \|L_j-L_{j-1}\|\le Chs_N^{-1}.
\]
Set $e_j=q_j-hL_{j+1}p_j$ for $0\le j\le N-1$.  The identity
$(I-R_j)L_j=-R_jB_j$, together with \cref{eq:discrete-p,eq:discrete-q}, gives
\[
  \|e_j\|
  \le (1-\delta_N)\|e_{j-1}\|
     +C\frac{h^2}{s_N}
     +C\frac{h^3}{s_N^2}.
\]
Since $e_0=O(h/s_N)$, a geometric-series estimate yields
\[
  \|e_j\|
  \le C\frac{h}{s_N}(1-\delta_N)^j
     +C\frac{h^2}{s_N^2}.
\]
Since $\delta_N\asymp s_N$ and the standing assumption gives
$h/s_N=(Ns_N)^{-1}\to0$, summing this bound makes the little-$o$ step explicit:
\[
  h\sum_{j=1}^N\|e_{j-1}\|
  \le C\frac{h^2}{s_N\delta_N}+C\frac{h^2}{s_N^2}
  =O\!\left(\frac{h^2}{s_N^2}\right)
  =o\!\left(\frac{h}{s_N}\right).
\]
Substituting $q_{j-1}=hL_jp_{j-1}+e_{j-1}$ into \cref{eq:discrete-p} and summing gives
\[
  p_N
  =\left[I-h\sum_{j=1}^NhB_j^\top W_jB_j\right]v_0
   +o\!\left(\frac{h}{s_N}\right).
\]
Finally, uniformly on the positive spectrum,
\[
  W_j
  =\frac12I+(I-e^{-s_NH_\perp(t_j)})^{-1}e^{-s_NH_\perp(t_j)}
  =s_N^{-1}H_\perp(t_j)^{-1}+O(s_N).
\]
The Riemann sum therefore gives
$p_N=v_0-(Ns_N)^{-1}\cG_Hv_0+o((Ns_N)^{-1})$.
Since $\|q_N\|^2=O((Ns_N)^{-2})=o((Ns_N)^{-1})$, expansion of
$\|p_N\|^2+\|q_N\|^2$ proves \cref{eq:discrete-large-tail}.
\end{proof}

\subsection{An anisotropic distance can reverse the comparison}
\label{app:anisotropic-counterexample}

For a positive-definite matrix $M$, define the corresponding endpoint progress by
\[
  \cP^M_{N,s}(v)
  :=v^\top Mv-(\Phi_{N,s}v)^\top M(\Phi_{N,s}v).
\]
The following closed four-dimensional path shows that the fixed-finite-versus-exact comparison need
not survive this change of geometry.  Let
\[
  P_0=\operatorname{diag}(I_2,0_2),
  \qquad
  \Omega=2\pi I_2,
  \qquad
  J=\begin{pmatrix}0&-\Omega^\top\\ \Omega&0\end{pmatrix},
\]
and define
\[
  R(t)=e^{tJ},
  \qquad
  P(t)=R(t)P_0R(t)^\top,
  \qquad
  H(t)=R(t)H_0R(t)^\top,
\]
where
\[
  H_0=\operatorname{diag}\!\left(0,0,\log2,\log\frac54\right).
\]
Take
\[
  v=\frac{1}{\sqrt5}(-2,1,0,0)^\top,
  \qquad
  M=\operatorname{diag}(M_0,I_2),
  \qquad
  M_0=\begin{pmatrix}1&0.9\\0.9&1\end{pmatrix}.
\]
The eigenvalues of $M_0$ are $0.1$ and $1.9$, so $M$ is positive definite.  The path is closed
because $R(1)=I$.  Since $J$ is off-diagonal relative to $P_0$, direct calculation gives
$[\dot P(t),P(t)]=J$; hence its Kato transport is $U(t)=R(t)$.  In the moving frame, $B(t)=\Omega$ and
$H_\perp=\operatorname{diag}(\log2,\log(5/4))$.  Therefore, at fitting time $s=1$,
\begin{equation}
  D:=\cG_1-\cG_\infty
  =(2\pi)^2\operatorname{diag}(1,4).
  \label{eq:counterexample-D}
\end{equation}

Because $U(1)=I$ and $M$ is block diagonal, the tangent--normal cross term vanishes.  The normal
endpoint component is $O(N^{-1})$ by \cref{eq:fixed-s-slow-update}, so its quadratic contribution is
$o(N^{-1})$.  The endpoint expansion \cref{eq:appendix-fixed-s-operator} therefore gives
\[
  \lim_{N\to\infty}
  N\bigl(\cP^M_{N,1}(v)-\cP^M_{N,\infty}(v)\bigr)
  =v^\top(M_0D+DM_0)v.
\]
For Euclidean distance, $M_0=I_2$, and the right-hand side is
\begin{equation}
  \frac{16}{5}(2\pi)^2>0.
  \label{eq:counterexample-euclidean}
\end{equation}
For the positive-definite matrix above,
\[
  M_0\operatorname{diag}(1,4)
  +\operatorname{diag}(1,4)M_0
  =\begin{pmatrix}2&4.5\\4.5&8\end{pmatrix},
\]
so the limit is instead
\begin{equation}
  -\frac25(2\pi)^2<0.
  \label{eq:counterexample-anisotropic}
\end{equation}
Thus finite fitting beats exact fitting under Euclidean distance but loses under this anisotropic
distance on the same task path and initial direction.

\subsection{Proof of the optimal training-time law}

The following lemma proves \cref{thm:optimal-main} and records the coefficients of its two tails.

\begin{lemma}[Optimal-time law]
\label{thm:optimal-time}
Let $v_0\in V(0)\cap\cK^\perp$ be nonzero.  Then:
\begin{enumerate}[label=(\roman*)]
  \item The limiting progress has the two asymptotic expansions
  \begin{align}
    \cP_\tau(v_0)
    &=2\tau\langle v_0,\cA_Hv_0\rangle+O(\tau^2),
    &&\tau\downarrow0,
    \label{eq:small-tail}\\
    \cP_\tau(v_0)
    &=\frac{2}{\tau}\langle v_0,\cG_Hv_0\rangle+o(\tau^{-1}),
    &&\tau\to\infty.
    \label{eq:large-tail}
  \end{align}
  \item The two tail operators satisfy
  \begin{equation}
    \ker\cA_H=\ker\cG_H=\cK
    \quad\text{within }V(0),
    \label{eq:operator-kernels}
  \end{equation}
  and $\cP_\tau(v_0)>0$ for every $\tau\in(0,\infty)$.  Hence
  \begin{equation}
    \cT^\star(v_0):=\argmaxop_{\tau>0}\cP_\tau(v_0)
    \label{eq:limit-optimizer-set}
  \end{equation}
  is a nonempty compact subset of $(0,\infty)$.
  \item For every sequence $s_N$,
  \begin{equation}
  \begin{array}{rcl}
    Ns_N\to0
    &\Longrightarrow&\cP_{N,s_N}(v_0)\to0,\\[1mm]
    Ns_N\to\tau\in(0,\infty)
    &\Longrightarrow&\cP_{N,s_N}(v_0)\to\cP_\tau(v_0)>0,\\[1mm]
    Ns_N\to\infty
    &\Longrightarrow&\cP_{N,s_N}(v_0)\to0.
  \end{array}
  \label{eq:three-regimes}
  \end{equation}
  \item For all sufficiently large $N$, every optimizer in \cref{eq:finite-optimizer} is finite.
  Every sequence of such optimizers satisfies
  \begin{equation}
    \dist\!\left(Ns_N^\star(v_0),\cT^\star(v_0)\right)\longrightarrow0.
    \label{eq:optimizer-localization}
  \end{equation}
  Consequently $s_N^\star(v_0)=\Theta(N^{-1})$.  If
  $\cT^\star(v_0)=\{\tau^\star(v_0)\}$, then
  \begin{equation}
    s_N^\star(v_0)=\frac{\tau^\star(v_0)}{N}+o(N^{-1}).
    \label{eq:unique-optimum}
  \end{equation}
\end{enumerate}
\end{lemma}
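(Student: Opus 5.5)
We prove the four parts in order, reusing \cref{thm:continuum} and \cref{lem:continuous-tracking,lem:discrete-tracking} throughout.

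\emph{Part (i).} For the small tail, start from the energy identity \cref{eq:continuum-energy}. The flow is a contraction and $\|H\|\le L$, so $\|x_\tau(t)-v_0\|\le L\tau\|v_0\|$ uniformly in $t$. Substituting gives $\cP_\tau(v_0)=2\tau\int_0^1 v_0^\top H(t)v_0\,dt+O(\tau^2)$. Since $v_0\in V(0)$, this leading term equals $2\tau\langle v_0,\cA_Hv_0\rangle$.

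For the large tail, pass to the Kato frame of \cref{lem:continuous-tracking}. Orthogonality of $U$ gives $x^\top Hx=q^\top H_\perp q$. Insert $q_\tau=-\tau^{-1}H_\perp^{-1}Bv_0+r_\tau$ into \cref{eq:continuum-energy}:
\[
\cP_\tau=2\tau\int_0^1 q_\tau^\top H_\perp q_\tau\,dt .
\]
The leading term is $2\tau^{-1}\langle v_0,\cG_Hv_0\rangle$. The cross term is bounded by Cauchy--Schwarz as $\tau\cdot O(\tau^{-1})\cdot o(\tau^{-1})=o(\tau^{-1})$. The remainder term is $\tau\, o(\tau^{-2})$. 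Both use \cref{eq:r-tracking}.

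\emph{Part (ii).} For the kernels, first take $\cA_H$. We have $\langle v,\cA_Hv\rangle=\int_0^1 v^\top H(t)v\,dt$, which vanishes iff $v\in\ker\bar H=\cK$, as shown in the proof of \cref{prop:pathwise-risk-equivalence}. Next take $\cG_H$. We have $\langle v,\cG_Hv\rangle=\int\|H_\perp^{-1/2}Bv\|^2$, which vanishes iff $B(t)v\equiv0$. The argument at the end of the proof of \cref{lem:discrete-tracking} then shows this forces $v\in\cK$. Since $v_0\in\cK^\perp\cap V(0)$ is nonzero, both tail coefficients are positive.

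Positivity for finite $\tau>0$ also follows from \cref{eq:continuum-energy}. The integrand is nonnegative and continuous. If the integral vanished, then $H(t)x_\tau(t)\equiv0$, so $\dot x_\tau\equiv0$ and $x_\tau\equiv v_0$. That gives $H(t)v_0=0$ for all $t$, i.e.\ $v_0\in\cK$, a contradiction.

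The curve $\tau\mapsto\cP_\tau$ is continuous, because $\|x_b(1)-x_c(1)\|\le L\|v_0\||b-c|$. It is positive on $(0,\infty)$ and tends to $0$ at both ends. Hence its supremum $m>0$ is attained. The set $\{\tau:\cP_\tau\ge m\}$ is closed and avoids neighbourhoods of $0$ and $\infty$, so it is compact in $(0,\infty)$.

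\emph{Part (iii).} If $Ns_N\to\tau\in[0,\infty)$, \cref{thm:continuum} applies directly, and $\cP_0=0$. If $Ns_N\to\infty$, split into cases. When $s_N\ge c>0$ along a subsequence, $\delta_N$ is bounded below and $N\delta_N\to\infty$. When $s_N\to0$, $\delta_N\asymp s_N$ and again $N\delta_N\asymp Ns_N\to\infty$. A general sequence is handled by passing to subsequences of these two types. In every case \cref{eq:uniform-discrete-bound} gives $\cP_{N,s_N}=O((N\delta_N)^{-1})\to0$.

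\emph{Part (iv).} This is the main obstacle, since it needs uniformity over arbitrary optimizer sequences; we argue by subsequences. Fix $\bar\tau\in\cT^\star$. The comparison sequence $s_N=\bar\tau/N$ gives $\liminf_N\cP_{N,s_N^\star}\ge m$.

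Now take any subsequence of $Ns_N^\star$ and extract a further subsequence converging in $[0,\infty]$. If the limit is $0$ or $\infty$, part (iii) gives progress tending to $0<m$, a contradiction. This includes $s_N^\star=\infty$ infinitely often, which is the case $Ns_N=\infty$ and is covered by the $\delta_N=1$ version of \cref{eq:uniform-discrete-bound}.

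So every limit $\tau'$ is finite and positive. Then part (iii) gives $\cP_{N,s_N^\star}\to\cP_{\tau'}$, hence $\cP_{\tau'}\ge m$ and $\tau'\in\cT^\star$. The same contradiction argument shows optimizers are finite for large $N$.

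Every subsequence therefore has a further subsequence on which $\dist(Ns_N^\star,\cT^\star)\to0$, which proves \cref{eq:optimizer-localization}. Compactness of $\cT^\star$ in $(0,\infty)$ then yields $s_N^\star=\Theta(N^{-1})$. When $\cT^\star$ is a singleton, \cref{eq:optimizer-localization} gives $Ns_N^\star\to\tau^\star$, which is \cref{eq:unique-optimum}.
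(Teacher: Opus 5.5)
Your proposal is correct and follows essentially the same route as the paper: the energy identity with $x_\tau=v_0+O(\tau)$ for the small tail, the Kato-frame tracking of \cref{lem:continuous-tracking} for the large tail, the transport argument for the kernels, the bound \cref{eq:uniform-discrete-bound} once $N\delta_N\to\infty$, and a subsequence/compactness argument to localize $Ns_N^\star$. Your comparison against a single maximizer $\bar\tau\in\cT^\star$, rather than against every $\gamma>0$, is an equivalent variant. The only step you leave implicit is the easy inclusion $\cK\subseteq\ker\cG_H$, which the paper spells out: for $v\in\cK$, $P(t)v=v$ gives $\dot P(t)v=0$, so the constant path solves the Kato equation, $U(t)v=v$, and hence $B(t)v=0$.
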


\begin{proof}
For small $\tau$, the integral form of \cref{eq:continuum-flow} gives, uniformly on $[0,1]$,
\[
  x_\tau(t)
  =v_0-\tau\int_0^tH(r)v_0\,dr+O(\tau^2).
\]
Since $x_\tau(t)=v_0+O(\tau)$ uniformly, substitution into
\cref{eq:continuum-energy} proves \cref{eq:small-tail}.  For large $\tau$,
\cref{lem:continuous-tracking} and \cref{eq:block-frame} give
\[
  \cP_\tau(v_0)
  =2\tau\int_0^1q_\tau(t)^\top H_\perp(t)q_\tau(t)\,dt,
\]
Set $g(t)=H_\perp(t)^{-1}B(t)v_0$.  By
\cref{eq:q-tracking,eq:r-tracking}, $q_\tau=-\tau^{-1}g+r_\tau$ with
$\|r_\tau\|_{L^2}=o(\tau^{-1})$.  Because $g$ and $H_\perp$ are bounded,
\[
  \int_0^1q_\tau^\top H_\perp q_\tau\,dt
  =\frac1{\tau^2}\int_0^1g^\top H_\perp g\,dt+o(\tau^{-2}).
\]
The integral equals $\langle v_0,\cG_Hv_0\rangle$, which proves
\cref{eq:large-tail}.

For $v\in V(0)$,
\begin{align*}
  \langle v,\cA_Hv\rangle
  &=\int_0^1\|H(t)^{1/2}v\|^2dt,\\
  \langle v,\cG_Hv\rangle
  &=\int_0^1\|H_\perp(t)^{-1/2}B(t)v\|^2dt.
\end{align*}
By continuity, the first integral vanishes exactly for $v\in\cK$.  For the second, vanishing is
equivalent to $B(t)v=0$ for every $t$.  Let $z(t)=U(t)v\in V(t)$.  Since $\dot P(t)$ maps
$V(t)$ into $V(t)^\perp$, the normal coordinate of $\dot P(t)z(t)$ is exactly $B(t)v$.
The Kato equation gives $\dot z(t)=\dot P(t)z(t)$, so $B(t)v=0$ makes $z(t)$ constant and places
$v$ in every $V(t)$.  Conversely, if $v\in\cK$, then $P(t)v=v$ and differentiation gives
$\dot P(t)v=0$, so the constant path solves the Kato equation and $B(t)v=0$.  This proves
\cref{eq:operator-kernels}.

If $\cP_\tau(v_0)=0$ for a finite positive $\tau$, the nonnegative integrand in
\cref{eq:continuum-energy} vanishes for every $t$.  Hence $H(t)x_\tau(t)=0$ for every $t$, the ODE
gives $\dot x_\tau(t)=0$, and $v_0=x_\tau(t)$ belongs to every $V(t)$.  This contradicts nonzero
$v_0\in\cK^\perp$, so $\cP_\tau(v_0)>0$.  The curve is continuous, is positive at each finite
$\tau>0$, and tends to zero at both ends by \cref{eq:small-tail,eq:large-tail}.  Its maximum is
therefore attained on a compact subinterval of $(0,\infty)$, which proves the claim about
\cref{eq:limit-optimizer-set}.

The first two lines of \cref{eq:three-regimes} follow from \cref{thm:continuum}.  If
$Ns_N\to\infty$, use
\[
  1-e^{-\mu s}\ge c_\mu\min\{s,1\},\qquad s\ge0,
\]
for a constant $c_\mu>0$.  If $s_N\le1$, this gives
$N(1-e^{-\mu s_N})\ge c_\mu Ns_N$; if $s_N>1$, it gives a lower bound of order $N$.
Thus $N(1-e^{-\mu s_N})\to\infty$, and
\cref{lem:discrete-tracking} gives the third line.

It remains to localize the finite-resolution optimizers.  Choose any $\bar\tau>0$.  By
\cref{thm:continuum},
$\cP_{N,\bar\tau/N}(v_0)\to\cP_{\bar\tau}(v_0)>0$, so the optimal value stays bounded away from
zero.  If no common positive lower bound existed for $Ns_N^\star$, one could choose task counts and
optimizers with $Ns_N^\star\to0$; the first regime would make their progress tend to zero, a
contradiction.  If no common upper bound existed, one could similarly choose
$Ns_N^\star\to\infty$, including the possibility $s_N^\star=\infty$; the third regime gives the same
contradiction.  Hence all optimizers are finite and $Ns_N^\star$ lies in one compact subinterval of
$(0,\infty)$ for all sufficiently large $N$.

Consider any convergent subsequence $Ns_N^\star\to\tau$.  For each fixed $\gamma>0$, optimality gives
$\cP_{N,s_N^\star}(v_0)\ge\cP_{N,\gamma/N}(v_0)$.  Uniform convergence on the compact interval
containing $Ns_N^\star$, together with \cref{thm:continuum}, yields
\[
  \cP_\tau(v_0)\ge\cP_\gamma(v_0)
  \qquad\text{for every }\gamma>0.
\]
Every accumulation point therefore belongs to $\cT^\star(v_0)$.  If
\cref{eq:optimizer-localization} failed, a subsequence at a fixed positive distance from
$\cT^\star(v_0)$ would have a further convergent subsequence whose limit belongs to
$\cT^\star(v_0)$, a contradiction.  This proves \cref{eq:optimizer-localization}; the final two
claims follow.
\end{proof}

\section{Details of the Numerical Experiments}
\label{app:numerical-details}

\subsection{Rotating rank-one path}

The example in \cref{sec:numerical} uses
\[
  H(t)=u(t)u(t)^\top,
  \qquad
  u(t)=\bigl(\cos(\pi t/2),\,\sin(\pi t/2)\bigr)^\top,
  \qquad
  v_0=(0,1)^\top.
\]
Thus $V(t)=u(t)^\perp$ rotates by $\pi/2$ and $\cK=\bigcap_tV(t)=\{0\}$.  For finite $N$, each
matrix-exponential update is evaluated as
\[
  e^{-sH(t_j)}v
  =v-(1-e^{-s})u(t_j)u(t_j)^\top v.
\]
Write $x_\tau(t)=R(\pi t/2)y_\tau(t)$.  Then
\[
  \dot y_\tau(t)
  =\begin{bmatrix}-\tau&\pi/2\\-\pi/2&0\end{bmatrix}y_\tau(t),
  \qquad y_\tau(0)=v_0.
\]
Therefore the limiting endpoint is
\[
  x_\tau(1)
  =R(\pi/2)
   \exp\!\left(
     \begin{bmatrix}-\tau&\pi/2\\-\pi/2&0\end{bmatrix}
   \right)v_0,
\]
where $R(\theta)$ denotes the planar rotation matrix through angle $\theta$, so the limiting curve
can be evaluated without time discretization.  Numerical maximization gives
$\tau^\star=2.19836$ and $\cP_{\tau^\star}(v_0)=0.63113$.  For exact fitting, consecutive
zero-loss directions differ by $\pi/(2N)$.  Each projection multiplies the norm by
$\cos(\pi/(2N))$, so
\[
  \cP_{N,\infty}(v_0)
  =1-\cos^{2N}\!\left(\frac{\pi}{2N}\right)
  =\frac{\pi^2}{4N}+O(N^{-2}).
\]

Because the single positive eigenvalue of $H(t)$ equals one, \cref{eq:main-fixed-s} specializes to
\begin{equation}
  N\cP_{N,s}(v_0)
  \longrightarrow
  \left(\frac{\pi}{2}\right)^2\coth(s/2)
  \quad(s<\infty),
  \qquad
  N\cP_{N,\infty}(v_0)
  \longrightarrow
  \left(\frac{\pi}{2}\right)^2.
  \label{eq:numerical-fixed-s}
\end{equation}
Equation~\eqref{eq:numerical-fixed-s} makes the new comparison explicit without an additional
panel: every finite $s$ has coefficient $(\pi/2)^2\coth(s/2)>(\pi/2)^2$, the coefficient obtained
by training each task to convergence.

\subsection{High-dimensional random smooth paths}
\label{app:highdim-experiment}

We also test the scaling law on ten independently generated paths with ambient dimension
$d=20$ and rank $r=6$.  Each path has the form
\[
  H(t)=Z(t)\Lambda Z(t)^\top,
  \qquad
  \Lambda=\operatorname{diag}(\lambda_1,\ldots,\lambda_r),
  \qquad
  \lambda_i\in[0.5,1.5],
\]
where $Z(t)\in\R^{d\times r}$ has orthonormal columns and is obtained by applying $80$ smooth random
Givens rotations to a random orthogonal
frame.  The rank and positive spectral gap are fixed along the path.  The initial error is a random
unit vector in $\ker H(0)$.  We retain a draw only when the smallest eigenvalue of the average of the
projectors onto $\operatorname{im}H(t)$ at $81$ equally spaced locations exceeds $10^{-5}$; the smallest value
among the ten retained paths is $1.48\times10^{-4}$.  This check ensures that the sampled subspaces
have trivial common kernel.

For $N\in\{16,32,64,128,256\}$, we evaluate the exact matrix-exponential updates over a logarithmic
grid of effective times $\tau=Ns$ and optimize over $\tau$ by bounded scalar minimization.  The
median scaled optimizer decreases only from $2.82$ at $N=16$ to $2.72$ at $N=256$, while median
optimal progress stays positive ($0.290$ to $0.278$).  By contrast, median exact-fitting
progress decreases from $0.0834$ to $0.00568$; a log--log fit over $N\ge32$ has slope $-0.981$.

\begin{figure}[h!]
  \centering
  \includegraphics[width=\linewidth]{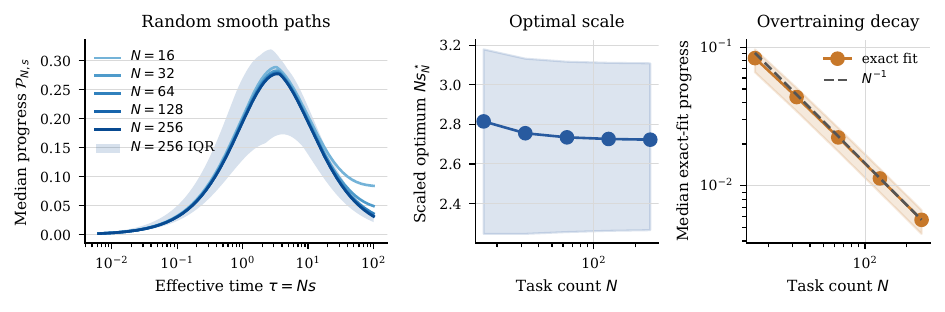}
  \caption{High-dimensional stress test over ten random smooth Hessian paths.  Left: median
  learning-progress curves, with the interquartile range (IQR) shown for $N=256$.  Middle: median
  scaled optimizer $Ns_N^\star$ and its IQR.  Right: median exact-fitting progress and its IQR,
  compared with an $N^{-1}$ reference line.}
  \label{fig:highdim}
\end{figure}

This experiment stays within the theorem's assumptions.  It tests higher dimension, unequal positive
eigenvalues, and noncommuting rotations of $\operatorname{im}H(t)$.  All random seeds are fixed in the
included experiment code, and the underlying numerical values are provided with the source package.

\subsection{Gradually rotated MNIST}
\label{app:rotated-mnist}

The experiment uses the standard MNIST training and test partitions.  For each seed, we sample
$30{,}000$ images from the training partition for optimization and $3{,}000$ held-out evaluation
images from the official test partition.  The same held-out sample is used for every $(N,k)$
configuration within that seed.  Rotations use bilinear interpolation, zero fill, no canvas
expansion, and the standard MNIST normalization.  Labels are unchanged along the path.  The
classifier is an MLP with widths $784$--$256$--$128$--$10$ and ReLU activations.

The initial model is trained for $1{,}000$ mini-batch SGD updates on the $0^\circ$ task.  Every
adaptation run starts from this same seed-specific checkpoint.  For a task count $N$, the sequential
training angles are $60j/N$ degrees for $j=1,\ldots,N$.  We use SGD without momentum, learning rate
$0.05$, batch size $256$, and

\[
  k\in\{0,1,2,4,8,16,32,64,128\}
\]

updates per task.  Within each $(N,\text{seed})$ pair, different values of $k$ use deterministic
prefixes of the same taskwise mini-batch streams.  We use the ten consecutive seeds from $20260805$
through $20260814$.

For evaluation, we rotate each held-out image to the 13 angles
$\{0^\circ,5^\circ,\ldots,60^\circ\}$.  Whole-path cross-entropy averages over all $39{,}000$
image--angle predictions and is therefore independent of the training discretization $N$.  The
held-out sample provides a fixed empirical approximation to the whole-path population risk.  The
reported $k^\star$ is the minimizer of this empirical risk curve, rather than a hyperparameter chosen
to report post-selection predictive performance.  No held-out examples are used for gradient
updates.  Lower values are better.

\begin{table}[h!]
  \centering
  \caption{Rotated-MNIST empirical optima over ten seeds.  Parentheses in the second column give the
  number of seeds selecting each grid value.  The interpolated columns use a local quadratic fit in
  $\log_2 k$ around the minimum of each mean evaluation curve.  The last column descriptively reports
  mean $\pm$ standard deviation at the grid value selected by mean whole-path cross-entropy; it is
  not an independent post-selection test estimate.}
  \label{tab:rotated-mnist}
  \begin{tabular}{c@{\qquad}c@{\qquad}c@{\qquad}c@{\qquad}c}
    \hline
    $N$ & seedwise best $k$ (count) & interpolated $k^\star$ & $Nk^\star$ & whole-path CE \\
    \hline
    $4$  & $8\ (10)$ & $8.95$ & $35.8$ & $0.502\pm0.018$ \\
    $8$  & $4\ (7),\ 8\ (3)$ & $5.07$ & $40.6$ & $0.503\pm0.020$ \\
    $16$ & $2\ (4),\ 4\ (6)$ & $3.07$ & $49.2$ & $0.499\pm0.025$ \\
    $32$ & $1\ (4),\ 2\ (6)$ & $1.52$ & $48.8$ & $0.507\pm0.029$ \\
    \hline
  \end{tabular}
\end{table}

The grid-selected best-$k$ sequences are nonincreasing in $N$ for all ten seeds.  The interpolated
total budgets remain between $35.8$ and $49.2$, and a log--log fit gives
$k^\star\propto N^{-0.84}$.  Resampling the seeds and repeating the interpolation gives the 95\%
confidence interval $[0.76,1.03]$ for this exponent.  This interval includes the theoretical value
$1$, but the comparison remains qualitative: there are four task counts and an integer update grid.
The nonlinear model, cross-entropy loss, finite samples, stochastic updates, and absence of a proved
shared zero-loss parameter also differ from the analytical setting.

\subsection{Temporal Yearbook}
\label{app:yearbook}

The Yearbook dataset contains aligned American high-school yearbook portraits indexed by calendar
year \citep{ginosar2017yearbooks}.  We use the official Wild-Time preprocessing and retain its
provided data split \citep{yao2022wildtime}: $33{,}431$ images are used for training and $3{,}758$
images from the benchmark test partition are held out for evaluation.  The images span 1930--2013
and are represented as $32\times32$ grayscale inputs.  We inherit the benchmark's binary labels only
to study temporal optimization; the experiment is not intended for demographic inference.

To keep the underlying path fixed as $N$ changes, a task centered at calendar year $t$ reweights
example $i$, with year $a_i$, according to
\[
  q_t(i)
  \propto
  \exp\!\left(-\frac{(a_i-t)^2}{2\sigma^2}\right)
  \mathbf{1}\{|a_i-t|\le4\sigma\},
  \qquad \sigma=4\ \text{years}.
\]
For $N\in\{4,8,16,32\}$, training visits the centers
$t_j=1930+83j/N$, $j=1,\ldots,N$.  Evaluation uses 17 equally spaced centers from 1930 through
2013 for every $N$.  At each center, held-out cross-entropy is averaged with the corresponding
normalized temporal weights; whole-path cross-entropy is the unweighted mean over the 17 centers.
The held-out split provides a fixed empirical approximation to the whole-path temporal risk.  The
reported $k^\star$ is the minimizer of this empirical risk curve, rather than an estimate of
post-selection predictive performance, and no held-out examples are used for gradient updates.

The classifier has four $3\times3$ convolutional blocks with channel widths
$32$--$32$--$64$--$64$.  Each block uses GroupNorm, ReLU, and $2\times2$ max pooling, followed by
global average pooling and a linear two-class head.  The source model is trained for $1{,}500$
updates on the source-time distribution.  Adaptation uses plain mini-batch SGD with learning rate
$0.03$, batch size $256$, no momentum or weight decay, and
\[
  k\in\{0,1,2,4,8,16,32,64,128\}
\]
updates at every visited center.  Larger $k$ extends a fixed taskwise mini-batch prefix within each
$(N,\text{seed})$ pair.  We use five consecutive seeds from $20260805$ through $20260809$.

\begin{table}[h!]
  \centering
  \caption{Yearbook empirical evaluation-set optima over five seeds.  Optima are obtained from local
  quadratic interpolation in $\log_2 k$ around the minimum of each mean evaluation curve.  Confidence
  intervals resample complete training seeds and repeat the averaging and interpolation; they do not
  include uncertainty from resampling the fixed evaluation split.}
  \label{tab:yearbook}
  \begin{tabular}{c@{\qquad}c@{\qquad}c@{\qquad}c}
    \hline
    $N$ & grid-selected $k$ & interpolated $k^\star$ & bootstrap 95\% CI \\
    \hline
    $4$  & $64$ & $79.90$ & $[63.54,128.00]$ \\
    $8$  & $16$ & $13.82$ & $[7.53,38.48]$ \\
    $16$ & $4$  & $4.23$  & $[2.70,13.10]$ \\
    $32$ & $4$  & $5.01$  & $[3.49,14.91]$ \\
    \hline
  \end{tabular}
\end{table}

A log--log fit to the interpolated mean-curve optima gives
$k^\star\propto N^{-1.37}$.  Repeating the fit in $10{,}000$ seed-bootstrap replicates gives a
median exponent of $1.30$ and a 95\% interval $[0.79,1.66]$.  The interval includes the theoretical
value $1$ but does not distinguish it sharply from nearby exponents.  Some $N=4$ replicates select
the largest tested value, so the upper endpoint in \cref{tab:yearbook} is grid-limited.

We also perform a necessary endpoint check for approximate joint fit.  A model trained on the
equally weighted source and target distributions has mean held-out endpoint cross-entropy $0.084$,
compared with $0.057$ for separately trained endpoint models; using the unrounded seed means, the
mean gap is $0.028$ across five seeds.
This shows low simultaneous endpoint loss, not a shared zero-loss minimizer along the full path.
The experiment further differs from the theorem through finite samples, supervised stochastic
updates, cross-entropy loss, temporal smoothing, and the nonlinear classifier.  It is therefore a
natural-shift consistency check rather than a verification of the assumptions.

\end{document}